\documentclass[sigconf]{acmart}
\AtBeginDocument{%
  }

\copyrightyear{2023}
\acmYear{2023}
\setcopyright{rightsretained}
\acmConference[KDD '23] {Proceedings of the 29th ACM SIGKDD Conference on Knowledge Discovery and Data Mining}{August 6--10, 2023}{Long Beach, CA, USA.}
\acmBooktitle{Proceedings of the 29th ACM SIGKDD Conference on Knowledge Discovery and Data Mining (KDD '23), August 6--10, 2023, Long Beach, CA, USA}

\acmPrice{}
\acmISBN{979-8-4007-0103-0/23/08}
\acmDOI{10.1145/3580305.3599300}

\settopmatter{printacmref=true}



\newcommand{\out}[1]{}
\usepackage{algorithm}
\usepackage{algorithmic}


\usepackage{amsmath,amsfonts,bm}









\def\eqref#1{equation~\ref{#1}}









\def\1{\bm{1}}










\def\mA{{\bm{A}}}

\def\mI{{\bm{I}}}

\DeclareMathAlphabet{\mathsfit}{\encodingdefault}{\sfdefault}{m}{sl}
\SetMathAlphabet{\mathsfit}{bold}{\encodingdefault}{\sfdefault}{bx}{n}


\def\gS{{\mathcal{S}}}

\usepackage{url}

\usepackage{microtype}
\usepackage{graphicx}
\usepackage{booktabs} %
\usepackage{pifont}
\usepackage{hyperref}

\usepackage{caption}
\usepackage{natbib}

\usepackage{amsmath,amssymb,amsfonts,amsthm}

\usepackage{xcolor}
\usepackage{mathtools}
\usepackage{dsfont}
\usepackage{hyperref}
\usepackage{bm}
\usepackage{nicefrac}
\usepackage{wrapfig}
\usepackage{lipsum}

\usepackage{algorithm,algorithmic}
\usepackage[linesnumbered, ruled,vlined,algo2e]{algorithm2e}

\newcounter{assumption}%
\renewcommand{\theassumption}{\arabic{assumption}}

\def\rT{{\rm T}}

\def\mI{\mathrm{I}}

\def\mI{\mathrm{I}}

\newcommand*\E[1]{\mathbb{E}\left[#1\right]}
\newcommand*\Ep[2]{\mathbb{E}_{#1}\left[#2\right]}

\newcommand*\lrb[1]{\left[#1\right]}
\newcommand*\lrbb[1]{\left\{#1\right\}}
\newcommand*\lrp[1]{\left(#1\right)}
\newcommand*\lrn[1]{\left\|#1\right\|}
\newcommand*\lrw[1]{\left\langle#1\right\rangle}


\newcommand{\rE}{{\mathbb E}}
\def\rT{{\rm T}}
\def\mI{\mathrm{I}}
\def\R{\mathbb{R}}

\def\E{e}
\def\mA{\mathrm{A}}

\def\mI{\mathrm{I}}

\def\rT{{\rm T}}

\def\mI{\mathrm{I}}

\newtheorem{lemma}{Lemma}
\newtheorem{theorem}{Theorem}

\newtheorem{proposition}{Proposition}

\DeclareMathOperator*{\argmax}{argmax}

\usepackage{multirow}
\usepackage{caption}
\usepackage{subcaption}


%


\usepackage{booktabs}
\usepackage{amssymb,amsmath}

\newcommand{\acf}{\textsc{LIG}}

\newcommand{\ours}{\textsc{INP}}




\begin{document}

\title{Deep Bayesian Active Learning for \\Accelerating Stochastic Simulation}

\author{Dongxia Wu}
\affiliation{%
  \institution{University of California, San Diego}
  \city{La Jolla}
  \state{CA}
  \country{USA}
}
\email{dowu@ucsd.edu}

\author{Ruijia Niu}
\affiliation{%
  \institution{University of California, San Diego}
  \city{La Jolla}
  \state{CA}
  \country{USA}
}
\email{rniu@ucsd.edu}

\author{Matteo Chinazzi}
\affiliation{%
 \institution{Northeastern University}
 \city{Boston}
 \state{MA}
 \country{USA}
 }
 \email{m.chinazzi@northeastern.edu}

\author{Alessandro Vespignani}
\affiliation{%
  \institution{Northeastern University}
  \city{Boston}
  \state{MA}
  \country{USA}
 }
 \email{a.vespignani@northeastern.edu}

\author{Yi-An Ma}
\affiliation{
  \institution{University of California, San Diego}
  \city{La Jolla}
  \state{CA}
  \country{USA}
}
\email{yianma@ucsd.edu}

\author{Rose Yu}
\affiliation{
  \institution{University of California, San Diego}
  \city{La Jolla}
  \state{CA}
  \country{USA}
}
\email{roseyu@ucsd.edu}
\renewcommand{\shortauthors}{Dongxia Wu et al.}

\begin{abstract}


Stochastic simulations such as large-scale, spatiotemporal, age-structured epidemic models are computationally expensive at fine-grained resolution. While deep surrogate models can speed up the simulations, doing so for stochastic simulations and with active learning approaches is an underexplored area. We propose Interactive Neural Process (\ours{}), a deep Bayesian active learning framework for learning deep surrogate models to accelerate stochastic simulations. \ours{} consists of two components, a spatiotemporal surrogate model built upon Neural Process (NP) family and an acquisition function for active learning.
For surrogate modeling, we develop Spatiotemporal Neural Process (STNP) to mimic the simulator dynamics. For active learning, we propose a novel acquisition function, Latent Information Gain (LIG), calculated in the latent space of NP based models. We perform a theoretical analysis and demonstrate that LIG reduces sample complexity compared with random sampling in high dimensions. We also conduct empirical studies on three complex spatiotemporal simulators for reaction diffusion, heat flow, and infectious disease. The results demonstrate that STNP outperforms the baselines in the offline learning setting and LIG achieves the state-of-the-art for Bayesian active learning. 
\end{abstract}

\begin{CCSXML}
<ccs2012>
   <concept>
       <concept_id>10010147.10010257.10010282.10011304</concept_id>
       <concept_desc>Computing methodologies~Active learning settings</concept_desc>
       <concept_significance>500</concept_significance>
       </concept>
   <concept>
       <concept_id>10010147.10010257.10010293.10010294</concept_id>
       <concept_desc>Computing methodologies~Neural networks</concept_desc>
       <concept_significance>500</concept_significance>
       </concept>
   <concept>
       <concept_id>10010147.10010257</concept_id>
       <concept_desc>Computing methodologies~Machine learning</concept_desc>
       <concept_significance>500</concept_significance>
       </concept>
 </ccs2012>
\end{CCSXML}

\ccsdesc[500]{Computing methodologies~Active learning settings}
\ccsdesc[500]{Computing methodologies~Neural networks}
\ccsdesc[500]{Computing methodologies~Machine learning}

\keywords{Bayesian active learning, neural processes, deep learning}

\maketitle

\section{Introduction}
Computational modeling is more than ever at the forefront of infectious disease research due to the COVID-19 pandemic. Stochastic simulations play a critical role in understanding and forecasting infectious disease dynamics, creating what-if scenarios, and informing public health policy making \citep{cramer2021evaluation}.  More broadly,  stochastic simulations \citep{ripley2009stochastic,asmussen2007stochastic} produce forecasts about complex interactions among people, environment, space, and time given a set of parameters. They provide the numerical tools to simulate  
stochastic processes in finance \citep{lamberton2007introduction}, chemistry \citep{gillespie2007stochastic} and many other scientific disciplines.

Unfortunately, stochastic simulations at fine-grained spatial and temporal resolution can be extremely computationally expensive. In example, epidemic models for realistic diffusion dynamics simulation via in-silico experiments require a large parameter space (e.g. characteristics of a virus, policy interventions, people's behavior). Similarly, reaction-diffusion systems that play an important role in chemical reaction and bio-molecular processes also involve a large number of simulation conditions. Therefore, hundreds of thousands of simulations are required to explore and calibrate the simulation model with observed experimental data. This process significantly hinders the adaptive capability of existing stochastic simulators, especially in ``war time'' emergencies, due to the lead time needed to execute new simulations and produce actionable insights that could help guide decision makers.

Learning deep surrogate models to speed up complex simulation has been explored in climate modeling and fluid dynamics for \textit{deterministic} dynamics \citep{sanchez2020learning, wang2020towards,holl2019learning,rasp2018deep, cachay2021climart},  but not for \textit{stochastic} simulations. These surrogate models can only approximate specific system dynamics and fail to generalize under different parametrization. Especially for pandemic  scenario planning, we desire models that can predict futuristic scenarios under different  conditions.
Furthermore, the majority of the surrogate models are trained \textit{passively} using  a simulation data set.  This requires a large number of  simulations beforehands to cover different parameter regimes of the simulator and ensure generalization.


We propose Interactive Neural Process (\ours{}), a deep Bayesian active learning framework to speed up stochastic simulations. Given parameters such as disease reproduction number, incubation and infectious periods, 
mechanistic simulators generate future outbreak states with time-consuming numerical integration. \ours{} accelerates the simulation by guiding a surrogate model to learn the input-output map between parameters and future states, hence bypassing numerical integration. 

The deep surrogate model of \ours{} is built upon Neural Process (NP) \cite{garnelo2018neural}, which lies between Gaussian process (GP) and neural network (NN). NPs can approximate stochastic processes and therefore are well-suitable for surrogate modeling of stochastic simulators. They learn distributions over functions and can generate prediction uncertainty for Bayesian active learning. Compared with GPs, NPs are more flexible and scalable for high-dimensional data with spatiotemporal dependencies. We design a novel Spatiotemporal Neural Process model (STNP) by introducing a time-evolving latent process for temporal dynamics and integrating spatial convolution for spatial modeling.


Instead of learning passively, we design \textit{active learning} algorithms to interact with the simulator and update our model in ``real-time’’. We derive a new acquisition function, Latent Information Gain (LIG), based on our unique model design.  Our algorithm selects the parameters with the highest LIG,  queries the simulator  to generate new simulation data, and continuously updates our model. We provide theoretical guarantees for the sample efficiency of this procedure over random sampling. We also demonstrate the efficacy of our method on  large-scale spatiotemporal epidemic and reaction diffusion models.  In summary, our contributions include:
\begin{itemize}
    \item Interactive Neural Process: a deep Bayesian active learning framework for accelerating large-scale stochastic simulation.
    \item New surrogate model, Spatiotemporal Neural Process (STNP), for high-dimensional spatiotemporal data that integrates temporal latent process and spatial convolution.
    \item New acquisition function, Latent Information Gain (\acf{}), based on the inferred  temporal latent process to quantify uncertainty with theoretical guarantees.
    \item Real-world application to speed up complex stochastic spatiotemporal  simulations including reaction-diffusion system, heat flow, and age-structured epidemic dynamics.
\end{itemize}

\section{Related Work}
\textbf{Bayesian Active Learning and Experimental Design.}
Bayesian active learning, or experimental design is well-studied in statistics and machine learning \citep{chaloner1995bayesian, cohn1996active}. Gaussian Processes (GPs) are popular for posterior estimation e.g. \cite{houlsby2011bayesian} and \citep{zimmer2018safe}, but often struggle in high-dimension. Deep neural networks provide scalable solutions  for active learning. Deep active learning has been applied to discrete problems such as image classification \citep{gal2017deep} and sequence labeling \citep{siddhant2018deep} whereas our task is continuous time series.   Our problem can also be viewed as sequential experimental design where we design simulation parameters to obtain the desired outcome (imitating the simulator). \citet{foster2021deep} propose deep design networks for Bayesian experiment design but they require a explicit likelihood model and conditional independence in experiments. \citet{kleinegesse2020bayesian} consider implicit models where the likelihood function is intractable, but computing the Jacobian through sampling path can be expensive and their experiments are mostly limited to low (<=10) dimensional design. In contrast, our design space is of much higher-dimension and we do not have access to an explicit likelihood model for the simulator.



\textbf{Neural Processes.}
Neural Processes (NP) \citep{garnelo2018neural} model distributions over functions and imbue  neural networks with the ability of GPs to estimate uncertainty. NP has many extensions such as attentive NP \citep{kim2019attentive} and functional NP \citep{louizos2019functional}. However, NP implicitly assumes permutation invariance in the latent variables and can be limiting in modeling temporal dynamics.    \citet{singh2019sequential} proposes sequential NP by incorporating a  temporal transition model into NP. Still, sequential NP assumes the latent variables are independent conditioned on the hidden states. We propose STNP with temporal latent process and spatial  convolution, which is well-suited for modeling the spatiotemporal dynamics of infectious disease. We apply our model to   real-world large-scale Bayesian active learning. Note that even though \citet{garnelo2018neural} has demonstrated  NP for Bayesian optimization, it is only for toy 1-D functions.

\textbf{Stochastic Simulation and Dynamics Modeling.}
Stochastic simulations are fundamental to many scientific fields \citep{ripley2009stochastic} such as epidemic modeling.  Data-driven models of infectious diseases are increasingly used to forecast the  evolution of an ongoing outbreak  \citep{arik2020interpretable, cramer2021evaluation,lourenco2020fundamental}. However, very few models can mimic the internal mechanism of a stochastic simulator and answer ``what-if questions''. GPs are commonly used   as surrogate models for expensive simulators \citep{meeds2014gps, gutmann2016bayesian, jarvenpaa2019efficient, qian2020and},  but GPs do not scale well to high-dimensional data. Likelihood-free inference methods \cite{lueckmann2019likelihood, papamakarios2019sequential, munk2019deep,wood2020planning} learn the posterior of the parameters given the observed data. They do neural density estimation, but require a lot of simulations. For active learning,  instead of relying on Monte Carlo sampling, we directly compute the information gain in the latent process. \citet{qian2020and} use GPs as a prior for a SEIR model  for learning lockdown policy effects, but GPs are computationally expensive and the simple SEIR model cannot capture the real-world large-scale, spatiotemporal dynamics considered in this work.  We demonstrate the use of deep sequence model as a prior distribution in Bayesian active learning. Our framework is also compatible with other  deep  sequence models for time series, e.g. Deep State Space \citep{rangapuram2018deep}, Neural ODE \citep{chen2018neural}. 



\section{Methodology}

\begin{figure*}[t]
  \centering
  \includegraphics[width=0.95\linewidth,trim={20 20 20 20}]{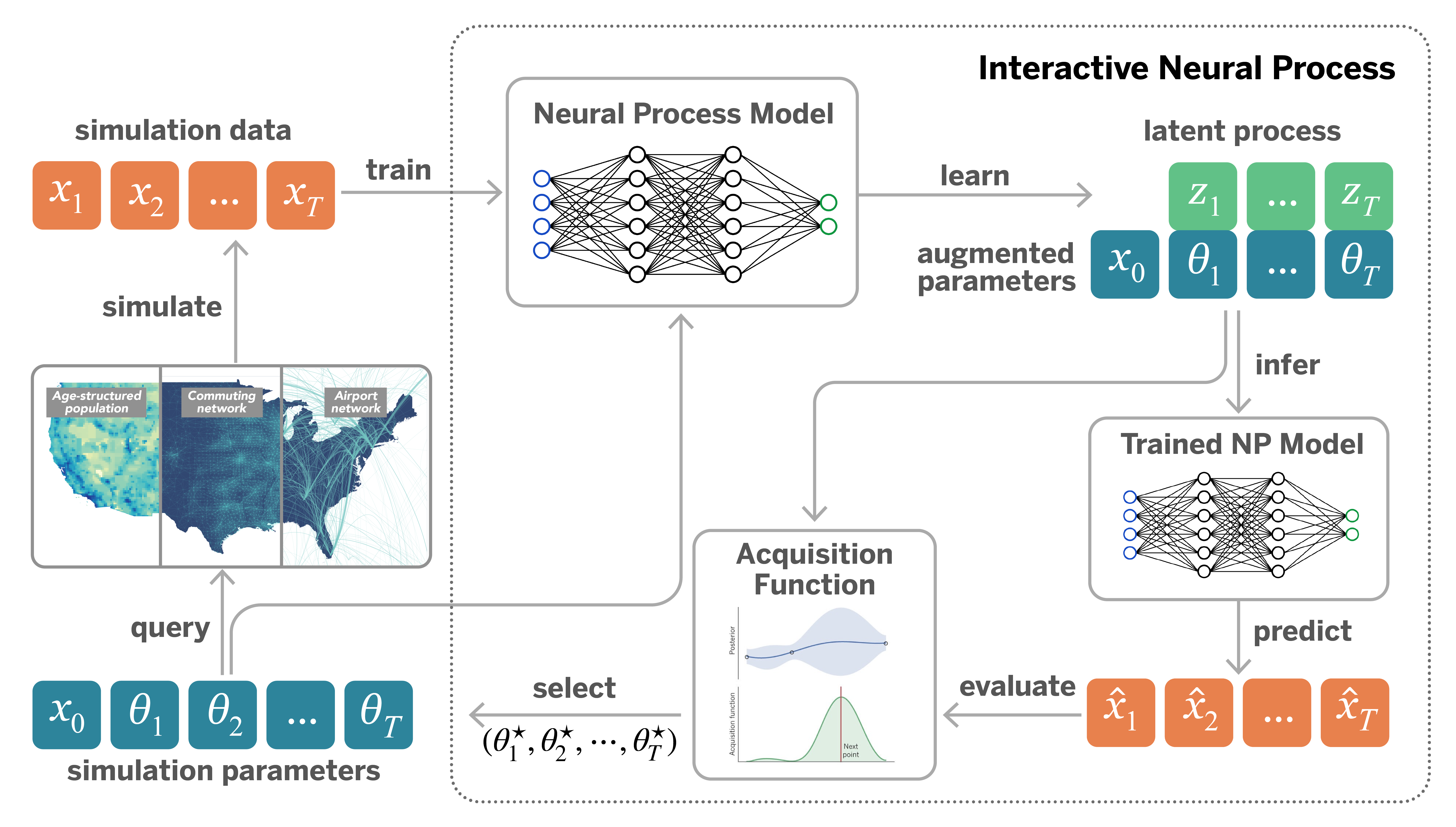}
  \caption{Illustration of the interactive Neural Process (\ours{}). Given simulation parameters and data, \ours{} trains a surrogate model (e.g. STNP) to infer the latent process. The inferred latent process allows prediction and uncertainty quantification. They are used to calculate the acquisition function (e.g. LIG) to select the next set of parameters to query, and simulate more data.}
  \label{fig:overview}
\end{figure*}

Consider a stochastic process $\{X_1, \cdots, X_T \}$, governed by  time-varying parameters $ \theta_t \in \mathbb{R}^K$, and the initial state  $x_0  \in \mathbb{R}^D$.   In epidemic modeling, $\theta_t$ can represent the effective reproduction number of the virus at a given time, the effective contact rates between individuals belonging to different age groups, the people's degree of short- or long-range mobility, or the effects of time varying policy interventions (e.g. non-pharmaceutical interventions). The state $x_t \in \mathbb{R}^D$ includes both the daily prevalence and daily incidence for each compartment of the epidemic model (e.g. number of people that are infectious and number of new infected individuals at time $t$).

Stochastic simulation uses a mechanistic model  $F( \theta ; \xi )$ to simulate the process where the random variable $\xi$ represents the randomness in the simulator. Let  $\theta :=(x_0, \theta_1, \cdots, \theta_T)$ represent the initial state and all the parameters over time. For each $\theta$, we obtain a different  set of simulation data $\{(x_1,  \cdots, x_T)_m\}_{m=1}^M$.  However, realistic large-scale  stochastic simulations require the exploration of a large parameter space  and are  extremely computationally intensive.   In the following section, we describe the Interactive Neural Process (\ours{}) framework to proactively query the stochastic simulator, generate simulation data, in order to learn a fast surrogate model for rapid simulation.

\subsection{Interactive Neural Process}



 \ours{} is used to train a deep  surrogate model to mimic  the stochastic simulator. 
As shown in Figure \ref{fig:overview}, given parameters $\theta$, we query the simulator, i.e., the mechanistic  model to obtain a set of simulations $\{(x_1, \cdots, x_T)_m\}_{m=1}^M$. We train a NP based  model to learn the probabilistic map from parameters to future states. Our  NP model can be  spatiotemporal to capture complex dynamics such as the  disease dynamics of the epidemic simulator.  During inference, the model needs to generate predictions $(\hat{x}_1,\cdots, \hat{x}_T)$ at the target  parameters  $\theta$ corresponding to different scenarios.


Instead of simulating at a wide range of parameter regimes, we take a Bayesian active learning approach to proactively query the simulator and update the model incrementally.  Using NP, we can infer the latent temporal process $(z_1, \cdots, z_T)$ that encodes the  uncertainty of the current surrogate model.  Then we propose a new acquisition function, Latent Information Gain (LIG), to select the  $\theta^\star$ with the highest reward. We use  $\theta^\star$ to query the simulator, and in turn generate new simulation to further improve the model. 
Next, we describe each of the components in detail. 

\subsection{Spatiotemporal Neural Process}
Neural Process ({NP}) \citep{garnelo2018neural}  is a type of deep generative model that represents distributions over functions. It introduces a global latent variable $z$ to capture the stochasticity and learns the conditional distribution $p(x_{1:T}|\theta)$  by optimizing the evidence lower bound (ELBO):
\begin{align}
    \log p(x_{1:T}|\theta) & \geq \mathbb{E}_{q(z| x_{1:T},\theta)} \big[ \log p(x_{1:T}|z, \theta)\big] \nonumber \\
    & - \text{KL}\big( q(z|x_{1:T}, \theta) \| p(z) \big)
    \label{eqn:np}
\end{align}
Here $p(z)$ is the prior distribution for the latent variable. We use $x_{1:T}$ as a shorthand for $(x_1,\cdots, x_T)$. The prior distribution $p(z)$ is conditioned on a set of   context points $\theta^c, x_{1:T}^c$ as $p(z| x_{1:T}^c,\theta^c)$.

However, the global latent variable $z$ in NP can be limiting for non-stationary, spatiotemporal dynamics in the epidemics. We propose Spatiotemporal Neural Process (STNP) with two extensions. First, we introduce a temporal latent process $(z_1,\cdots, z_T)$ to represent the unknown dynamics.  The latent process  provides an expressive description of the internal mechanism of the stochastic simulator. Each latent variable  $z_t$ is sampled conditioning on the past history.   Second, we explicitly model the spatial dependency in  $x_t \in \mathbb{R}^D$. Rather than treating the dimensions in $x_t$ as independent features, we capture their correlations with regular grids or graphs.  For instance, the travel graph between locations can be represented as an adjacency matrix $A \in \mathbb{R}^{D\times D}$.


Given parameters $\{\theta\}$, simulation data $\{x_{1:T}\}$, and the spatial graph $A$ as inputs,  STNP models the conditional distribution $p(x_{1:T}|\theta, A)$ by optimizing the following ELBO objective: 
\begin{align}
    \log p(x_{1:T}|\theta,A) & \geq \mathbb{E}_{q(z_{1:T}| x_{1:T}, \theta,A)}  \log p(x_{1:T}|z_{1:T}, \theta,A) \nonumber \\
    & - \text{KL}\big( q(z_{1:T}|x_{1:T},\theta, A)\| p(z_{1:T}) \big)
    \label{eqn:stnp}
\end{align}
where  the  distributions $q(z_{1:T}| x_{1:T},\theta, A)$ and  $p(x_{1:T}| z_{1:T}, \theta ,A)$ are parameterized with neural networks. The prior distribution  $p(z_{1:T})$ is conditioned on a set of contextual sequences $p(z_{1:T}| x^c_{1:T}, \theta^c,A)$. Figure \ref{fig:graphical_model} visualizes the graphical models of our STNP, the original NP \citep{garnelo2018neural} model and Sequential NP \citep{singh2019sequential}. The main difference between STNP and baselines is the encoding procedure to infer the temporal latent process. Compared with STNP which directly embeds the history for $z$ inference at the current timestamp, NP ignores the history and SNP only embeds the partial history information from the previous $z$.  
\begin{figure*}[t!]
    \centering
    \includegraphics[width=0.8\linewidth]{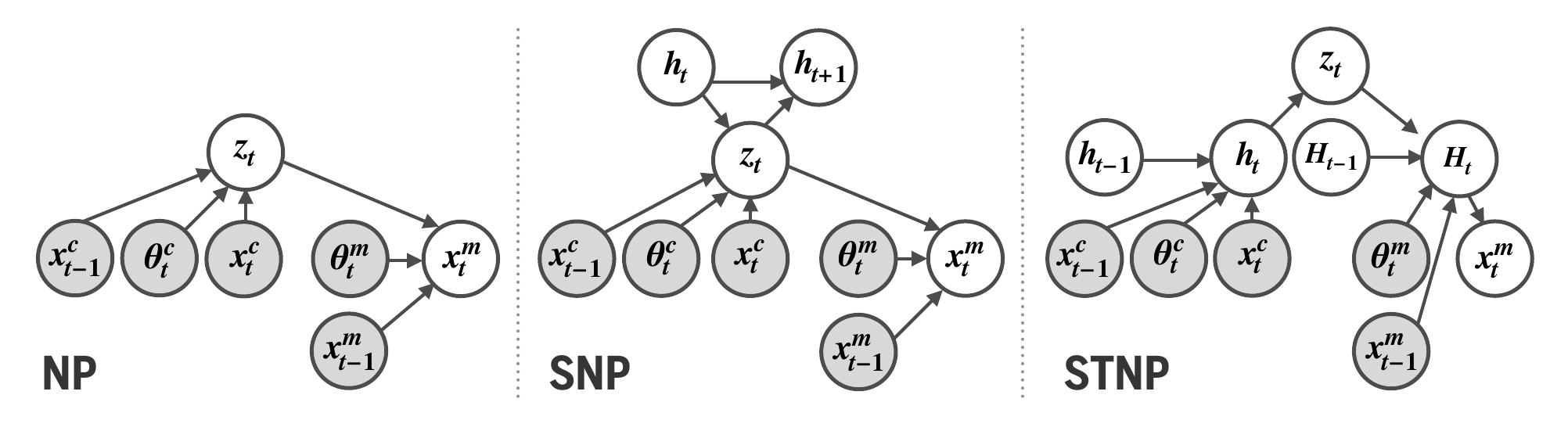}
    \caption{Graphical model comparison: Neural Process, Sequential Neural Process and our Spatiotemporal Neural Process.}
    \label{fig:graphical_model}
\end{figure*}

We implement STNP  following an encoder-decoder architecture. The encoder parametrizes the mean and standard deviation of the variational posterior $q(z_{1:T}| x_{1:T},\theta, A)$ and the decoder approximates the predictive distribution $p(x_{1:T}| z_{1:T}, \theta ,A)$. To incorporate the spatial graph information, we use a Diffusion Convolutional Gated Recurrent Unit (DCGRU) layer \citep{li2017diffusion} which integrates  graph convolution in a GRU cell. We use multi-layer GRUs to obtain  hidden states from the inputs. Using re-parametrization \citep{kingma2013auto}, we  sample $z_{t}$ from  the encoder and then  decode $x_{t}$ conditioned on $z_{t}$ in an auto-regressive fashion. To ensure fair comparisons, we adapt NP and SNP to graph-based settings and use the same architecture as STNP to generate the hidden states. Noted if the spatial dependency is regular grid-based, then the DCGRU layer is replaced to Convolutional LSTM layer \cite{lin2020preserving,wang2017predrnn,shi2015convolutional,yao2019revisiting,yao2018deep}, and there is no adjacency matrix $A$ in Equation \ref{eqn:stnp}.

\begin{algorithm}[t!]
\begin{algorithmic}
\STATE {\bfseries Input:} Initial simulation dataset $\gS_1$
\STATE Train the model $\texttt{NP}^{(1)}(\gS_1)$;
\FOR{$i =1,2, \cdots $} 

\STATE   Learn $(z_{1}, z_2,\cdots, z_T)\sim  q^{(i)}(z_{1:T}|x_{1:T}, \theta, \gS_i)$;
\STATE   Predict $(\hat{x}_{1}, \hat{x}_2,\cdots, \hat{x}_T) \sim p^{(i)}(x_{1:T}|z_{1:T}, \theta,\gS_i )$;
\STATE  Select  a batch of data: \\ 
$\{\theta^{(i+1)}\} \gets \argmax_{\theta}\Ep{p(x_{1:T}|z_{1:T}, \theta)}{r(\hat{x}_{1:T}| z_{1:T}, \theta)}$;
\STATE Simulate $\{x_{1:t}^{(i+1)}\} \gets $ Query the simulator $F(\theta^{(i+1)};\xi)$;
\STATE   Augment training set $\gS_{i+1} \gets \gS_i \cup \{\theta^{(i+1)}, x_{1:T}^{(i+1)}\}$;
\STATE   Update the model $\texttt{NP}^{(i+1)}(\gS_{i+1})$;

\ENDFOR
\end{algorithmic}
 \caption{Interactive Neural Process}
 \label{algo:inter_np}

\end{algorithm}



\subsection{Bayesian Active Learning}
Algorithm \ref{algo:inter_np} details a Bayesian active learning algorithm, based on Bayesian optimization \citep{shahriari2015taking, frazier2018tutorial}. We train an NP model to interact with the simulator and improve learning.  Let the superscript $^{(i)}$  denote the $i$-th interaction. 
 We start with an initial data set $\gS_1 = \{\theta^{(1)}, x_{1:T}^{(1)} \}$ and use it to train our NP model and learn the latent process.   During inference, given the augmented parameters $\theta$, we  use the trained NP model to predict the future states $(\hat{x}_1,\cdots, \hat{x}_T)$. We evaluate the current models' predictions with an acquisition function $r(\hat{x}_{1:T}| z_{1:T}, \theta)$ and select the set of parameters $\{\theta^{(i+1)}\}$ with the highest reward.  We query the simulator with $\{\theta^{(i+1)}\}$ to  augment the training data set  $\gS_{i+1}$ and update the NP model for the next iteration.





The choice of  the reward (acquisition) function $r$ depends on the  goal of the active learning task. For example,  to find the   model that best fits the data, the reward function can be the log-likelihood  $r= \log p(\hat{x}_{1:T}|\theta, A) $. To collect data  and reduce model uncertainty in Bayesian experimental design,  the reward function can be the mutual information. In what follows, we discuss different strategies to design the reward/acquisition function. We also propose a novel acquisition function based on information gain in the latent space tailored to our STNP model.



\subsection{Reward/Acquisition functions}
For regression tasks,  standard acquisition functions for active learning  include Maximum Mean Standard Deviation (Mean STD), Maximum Entropy,  Bayesian Active Learning by Disagreement (BALD) or expected information gain (EIG), and random  sampling \citep{gal2017deep}. We explore various acquisition functions and their approximations in the context of NP. We  also introduce a new acquisition function based on our unique NP design called {Latent Information Gain} (LIG). The details of Mean STD and Maximum Entropy are shown in the Appendix \ref{app:acquisition}.

%
\textbf{BALD/Expected Information Gain (EIG).}  
BALD \citep{houlsby2011bayesian} quantifies the mutual information between the prediction and model posterior $H(\hat{x}_{1:T}|\theta) - H(\hat{x}_{1:T}| z_{1:T},\theta) $, which is equivalent to the expected information gain (EIG). Computing the EIG for surrogate modeling is challenging since $p(\hat{x}_{1:T}| z_{1:T},\theta)$ cannot be found in closed form in general. The integrand is intractable and conventional MC methods are not applicable \citep{foster2019variational}. One way to get around this is to employ a nested MC estimator with quadratic computational cost for sampling \citep{myung2013tutorial,vincent2017darc}, which is computationally infeasible. To reduce the computational cost, we assume $p(\hat{x}_{1:T}| z_{1:T},\theta)$ follows multivariate Gaussian distribution. Each feature of $\hat{x}_{1:T}$ can be parameterized with mean and standard deviation predicted from the surrogate model, assuming output features are independent with each other. This distribution assumption can be limiting in the high-dimensional spatiotemporal domain, which makes EIG  less  informative. 

\textbf{Latent Information Gain (LIG).} 
To overcome the limitations mentioned above, we propose a novel acquisition function by computing the expected information gain in the latent space rather than  the observational space. To design this acquisition function, we prove the equivalence between the expected information gain in the observational space and the expected KL divergence in the latent processes w.r.t. a candidate parameter $\theta$, as illustrated by the following proposition.

\begin{proposition}
\label{prop:information_gain}
The expected information gain (EIG) for Neural Process is equivalent to the KL divergence between the prior and posterior in the latent process, that is 
\begin{align}
\mathrm{EIG}(\hat{x}_{1:T}, \theta) & :=\mathbb{E} [H(\hat{x}_{1:T}) - H(\hat{x}_{1:T}| z_{1:T}, \theta)] \nonumber \\
& = \mathbb{E}_{p(\hat{x}_{1:T}|\theta)} \lrb{ \mathrm{KL}\big( p( z_{1:T}| \hat{x}_{1:T}, \theta) \| p(z_{1:T})  \big) } 
\end{align}
\end{proposition}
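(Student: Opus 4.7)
The statement is essentially the classical symmetry of mutual information applied in the conditional setting (conditioned on $\theta$), so my plan is to recognize that $\mathrm{EIG}(\hat{x}_{1:T},\theta)$ as written equals the mutual information $I(\hat{x}_{1:T}; z_{1:T} \mid \theta)$ and then rewrite that quantity using the ``other'' variable.

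First I would unpack the definition on the left-hand side. The expectation of $H(\hat{x}_{1:T}) - H(\hat{x}_{1:T}\mid z_{1:T},\theta)$ is the conditional mutual information $I(\hat{x}_{1:T}; z_{1:T}\mid \theta)$, where all densities are those induced by the NP generative model, namely $p(\hat{x}_{1:T}, z_{1:T}\mid \theta) = p(\hat{x}_{1:T}\mid z_{1:T},\theta)\, p(z_{1:T})$ (with the prior $p(z_{1:T})$ being independent of $\theta$ after context conditioning).

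Next I would invoke the symmetry $I(X;Z) = H(Z) - H(Z\mid X)$, applied conditionally on $\theta$. Expanding this by the definition of entropy,
\begin{align*}
I(\hat{x}_{1:T}; z_{1:T}\mid \theta)
&= -\int p(z_{1:T}) \log p(z_{1:T})\, dz_{1:T} \\
&\quad + \int p(\hat{x}_{1:T}\mid \theta) \int p(z_{1:T}\mid \hat{x}_{1:T},\theta)\log p(z_{1:T}\mid \hat{x}_{1:T},\theta)\, dz_{1:T}\, d\hat{x}_{1:T}.
\end{align*}
Using the marginal identity $p(z_{1:T}) = \int p(z_{1:T}\mid \hat{x}_{1:T},\theta)\, p(\hat{x}_{1:T}\mid \theta)\, d\hat{x}_{1:T}$ (which holds because $p(z_{1:T})$ is the prior that gets updated to the posterior by conditioning on $\hat{x}_{1:T}$), I can rewrite the first term so the two integrals combine into a single expectation. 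The result is
\begin{align*}
I(\hat{x}_{1:T}; z_{1:T}\mid \theta) = \mathbb{E}_{p(\hat{x}_{1:T}\mid \theta)}\bigl[\mathrm{KL}\bigl(p(z_{1:T}\mid \hat{x}_{1:T},\theta)\,\|\,p(z_{1:T})\bigr)\bigr],
\end{align*}
which is exactly the right-hand side of the proposition.

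The only step that requires real care is the second: making sure the chain-rule / Bayes identity $p(\hat{x}_{1:T},z_{1:T}\mid\theta) = p(z_{1:T})\, p(\hat{x}_{1:T}\mid z_{1:T},\theta) = p(\hat{x}_{1:T}\mid\theta)\, p(z_{1:T}\mid \hat{x}_{1:T},\theta)$ is being used consistently with the NP model's conditional independence assumptions, in particular that the prior $p(z_{1:T})$ does not depend on $\theta$ (it is conditioned only on the context set). Once that is confirmed, the rest is a direct algebraic manipulation and there is no substantive obstacle. The practical payoff, as the surrounding text emphasizes, is that the right-hand side admits a vanilla Monte Carlo estimator (sample $\hat{x}_{1:T}\sim p(\cdot\mid\theta)$, then evaluate a Gaussian-Gaussian KL in the latent space), avoiding the nested-MC cost of the left-hand side.
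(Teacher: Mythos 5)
Your proof is correct and takes essentially the same route as the paper's: both arguments reduce to the symmetry of mutual information together with the conditional independence $p(z_{1:T}\mid\theta)=p(z_{1:T})$, which is exactly the step the paper invokes when it replaces $p(z_{1:T})$ by $p(z_{1:T}\mid\theta)$ in the KL ratio. You merely traverse the identity in the opposite direction, going from the entropy difference through $H(z_{1:T})-H(z_{1:T}\mid\hat{x}_{1:T},\theta)$ to the expected KL, whereas the paper expands the expected KL and applies Bayes' rule to recover the entropy difference in $\hat{x}_{1:T}$.
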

See proof in the Appendix \ref{app:latent_info_gain}. Inspired by this fact, we propose a novel acquisition function computing the expected KL divergence in the latent processes and name it LIG. Specifically, the trained NP model produces a  variational posterior given the current dataset $\gS$ as $p(z_{1:T}|\gS)$. For every parameter  $\theta$ remained in the search space,  we can predict $\hat{x}_{1:T}$ with the decoder.  We use $\hat{x}_{1:T}$ and $\theta$ as input to the encoder to re-evaluate the posterior $p(z_{1:T}|\hat{x}_{1:T},\theta, \gS)$. LIG computes the distributional difference with respect to the latent process  $z_{1:T}$ as $\mathbb{E}_{p(\hat{x}_{1:T}|\theta)} \lrb{ \mathrm{KL}\lrp{  p(z_{1:T}| \hat{x}_{1:T},\theta,\gS)\| p(z_{1:T}|\gS) }}$, where $\mathrm{KL}(\cdot \| \cdot)$ denotes the KL-divergence between two distributions. 

In this way, conventional MC method becomes applicable, which helps reduce the quadratic computational cost to linear. At the same time, although  $z_{1:T}$ are assumed to be multivariate Gaussian and are parameterized  with mean and standard deviation, they are only in the latent space not the observational space. Moreover,  LIG is also more computationally efficient and accurate for batch active learning. Due to the context aggregation mechanism of NP, we can directly calculate LIG with respect to a batch of $\theta$ in the candidate set. This is not available for baseline acquisition functions. They all require calculating the scores one by one for all $\theta$ in the candidate set and select a batch of $\theta$ based on their scores. Such approach is both slow and inaccurate as acquiring points that are informative individually are not necessarily informative jointly \citep{kirsch2019batchbald}.




\subsection{Theoretical Analysis}
We shed light onto the intuition behind choosing adaptive sample selection over random sampling via analyzing a simplifying situation.
Assume that at a certain stage we have learned a feature map $\Psi$ which maps the input $\theta$ of the neural network to the last layer.
Then the output $X$ can be modeled as $X = \lrw{\Psi(\theta), z^*} + \epsilon$, 
where $z^*$ is the true hidden variable, $\epsilon$ is the random noise.

Our goal is to generate an estimate $\hat{z}$, and use it to make predictions $\lrw{\Psi(\theta), \hat{z}}$.
A good estimate shall achieve small error in terms of $\lrn{\hat{z}_t-z^*}_2$ with high probability.
%
In the following theorem, we prove that greedily maximizing the variance of the prediction to choose $\theta$ will lead to an error of order $\mathcal{O}({d})$ less than that of random exploration in the space of $\theta$, which is significant in high dimension.

\begin{theorem}
\label{thm:theorm}
For random feature map $\Psi(\cdot)$, greedily optimizing the KL divergence,
$\mathrm{KL}\lrp{ p(z|\hat{x},\theta) \| p(z) }$
, or equivalently the variance of the posterior predictive distribution
$\rE\lrb{ \lrp{\lrw{\Psi(\theta), \hat{z}} - \rE\lrw{\Psi(\theta), \hat{z}} }^2 }$ in search of $\theta$ will lead to an error $\lrn{\hat{z}_t-z^*}_2$ of order $\mathcal{O}\lrp{ {\sigma d}/{\sqrt{t}} }$ with high probability.
On the other hand, random sampling of $\theta$ will lead to an error of order $\mathcal{O}\lrp{ {\sigma d^{2}}/{\sqrt{t}} }$ with high probability.
\end{theorem}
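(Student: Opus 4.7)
The plan is to reduce the theorem to a Bayesian linear regression analysis in the feature space. Under the stated model $X = \langle \Psi(\theta), z^*\rangle + \epsilon$ with $\epsilon \sim \mathcal{N}(0, \sigma^2)$ and a standard Gaussian prior $z \sim \mathcal{N}(0, I_d)$, conjugacy gives a Gaussian posterior after $t$ queries with mean $\hat{z}_t$ and covariance $\Sigma_t = \bigl(I_d + \sigma^{-2}\sum_{s=1}^t \Psi(\theta_s)\Psi(\theta_s)^{\top}\bigr)^{-1}$. A standard bias–variance decomposition together with a sub-Gaussian tail bound for $\hat{z}_t$ yields $\|\hat{z}_t - z^*\|_2 = \mathcal{O}\bigl(\sqrt{\mathrm{tr}(\Sigma_t)}\bigr)$ with high probability, which reduces the theorem to controlling $\mathrm{tr}(\Sigma_t)$ under the two sampling rules.

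Next, I would translate greedy KL maximization into the language of optimal experimental design. A direct Gaussian computation (matrix determinant lemma for a rank-one update) gives $\mathrm{KL}\bigl(p(z\mid \hat{x},\theta)\,\|\,p(z)\bigr) = \tfrac{1}{2}\log\bigl(1 + \sigma^{-2}\Psi(\theta)^\top \Sigma_{t-1}\Psi(\theta)\bigr)$ up to mean-shift terms that vanish in expectation over $\hat{x}$, so greedily maximizing this criterion picks, at every step, the $\theta_t$ whose feature $\Psi(\theta_t)$ aligns with the leading eigenvector of $\Sigma_{t-1}$. A power-iteration / D-optimality argument then shows that the eigenvalues of $\Sigma_t^{-1}$ become approximately balanced after $t \geq c\,d$ steps, producing $\mathrm{tr}(\Sigma_t) = \mathcal{O}(\sigma^2 d^2/t)$ and hence $\|\hat{z}_t - z^*\|_2 = \mathcal{O}(\sigma d/\sqrt{t})$.

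For the random-sampling baseline I would invoke matrix Chernoff or Bernstein bounds on $\sum_s \Psi(\theta_s)\Psi(\theta_s)^\top$ for i.i.d.\ $\theta_s$ drawn from the prior. Because a random design cannot concentrate effort on the currently most uncertain directions, the smallest eigenvalue of $\Sigma_t^{-1}$ is smaller than in the greedy case by a factor of order $d$; propagating this through the trace computation gives the weaker bound $\|\hat{z}_t - z^*\|_2 = \mathcal{O}(\sigma d^2/\sqrt{t})$.

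The hard part will be the sharp spectral comparison underlying the second and third steps. Upper-bounding the greedy error reduces to a classical D-optimality argument, but establishing that random sampling is genuinely worse by a full factor of $d$ requires either exhibiting an unfavourable instance of the random feature map in which the Gram matrix is badly conditioned, or proving a matching lower bound on $\mathbb{E}\bigl[\mathrm{tr}(\Sigma_t)\bigr]$ via the spectral distribution of $t^{-1}\sum_s \Psi(\theta_s)\Psi(\theta_s)^\top$. The final conversion from posterior-covariance bounds into high-probability $\ell_2$ error bounds is standard (sub-Gaussian tail of the posterior mean plus a union bound) and would be relegated to the appendix.
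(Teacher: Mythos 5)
Your first two steps track the paper's argument closely: the paper likewise reduces the problem to ridge-regularized Bayesian linear regression in feature space, bounds the error by a bias term plus a noise term controlled by $\sigma_{\min}\big(\sum_{s}\Psi(\theta_s)\Psi(\theta_s)^{\top}\big)$ (its Lemma~1 uses the cruder bound $\sigma\sqrt{d}/\sqrt{\sigma_{\min}}$ rather than your $\sqrt{\mathrm{tr}(\Sigma_t)}$), and shows that greedily maximizing the KL, equivalently the posterior predictive variance, selects $\Psi(\theta_k)$ along the smallest eigenvector of the current precision matrix, so the spectrum equalizes, $\sigma_{\min}=\Omega(k/d)$, and the greedy error is $\mathcal{O}(\sigma d/\sqrt{k})$. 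That half of your plan is essentially the paper's proof.

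The gap is in the random-sampling half. You attribute the extra factor of $d$ to the inability of a random design to ``concentrate effort on the currently most uncertain directions,'' but that heuristic is false for an isotropic feature map: if $\Psi=\mathrm{id}$, i.i.d.\ directions on the sphere give $\Theta_k\Theta_k^{\top}\approx (k/d)\,\mathrm{I}$, the same balanced spectrum greedy achieves, and the paper explicitly notes that random sampling then already attains $\mathcal{O}(\sigma d/\sqrt{k})$. The entire separation in the theorem comes from the feature map being a \emph{square Gaussian random matrix}, whose condition number is $\Theta(d)$ with high probability. Greedy selection is immune to this distortion because it optimizes directly over the feature $\Psi(\theta)$, whereas random sampling in $\theta$-space inherits it: $\sigma_{\min}\big(\Psi\Theta_k\Theta_k^{\top}\Psi^{\top}\big)\geq \sigma_{\min}^2(\Psi)\,\sigma_{\min}\big(\Theta_k\Theta_k^{\top}\big)$, which costs a factor $\kappa(\Psi)^2=d^2$ in the smallest eigenvalue (from $k/d$ down to $k/d^3$) and hence, through the $\sigma\sqrt{d}/\sqrt{\sigma_{\min}}$ bound, exactly one extra factor of $d$ in the error. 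Your own quantitative claim---that the smallest eigenvalue drops ``by a factor of order $d$''---would only yield $\mathcal{O}(\sigma d^{3/2}/\sqrt{k})$, not the stated $\mathcal{O}(\sigma d^{2}/\sqrt{k})$. Finally, the theorem asserts only upper bounds for both strategies, so the matching lower bound you flag as the hard part is not actually needed; what is needed, and what your outline is missing, is the condition-number argument for the random linear feature map.
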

See proofs in the Appendix \ref{app:thm1_proof}.
\section{Experiments}



We evaluate our proposed STNP for its surrogate modeling performance in the offline learning setting and LIG acquisition function for active learning performance.
%
We aim to verify that 
(a) LIG outperforms other acquisition functions in the NP and GP model setting for deep Bayesian active learning on non-spatiotemporal surrogate modeling, (b) STNP outperforms other existing baselines for spatiotemporal surrogate modeling in the offline learning setting, and (c) LIG outperforms other acquisition functions in the STNP model setting for deep Bayesian active learning on spatiotemporal surrogate modeling. The implementation code is available at \href{https://github.com/Rose-STL-Lab/Interactive-Neural-Process}{https://github.com/Rose-STL-Lab/Interactive-Neural-Process}.

\subsection{Experimental Setup}
We experiment with the following four stochastic simulators.

\textbf{SEIR Compartmental Model.}
To highlight the difference between NP and GP, we begin with a simple stochastic, discrete, chain-binomial SEIR compartmental model as our stochastic simulator. In this model,  susceptible individuals ($S$) become exposed ($E$) through interactions with infectious individuals ($I$) and are eventually removed ($R$), details  are deferred to the Appendix \ref{app:seir_model}.

We set the total population $N=S+E+I+R$ as $100,000$, the initial number of exposed individuals as $E_0=2,000$, and the initial number of infectious individuals as $I_0=2,000$. We assume latent individuals move to the infectious stage at a rate $\varepsilon \in [0.25,0.65]$ (step 0.05), the infectious period $\mu^{-1}$ is set to be equal to 1 day,  and we let the basic reproduction number $R_0$ (which in this case coincides with the transmissibility rate $\beta$) vary between $1.1$ and $4.0$ (step $0.1$). Here, each $(\beta,\varepsilon)$ pair corresponds to a specific scenario, which determines the parameters $\theta$. We simulate the first $100$ days of the epidemic with a total of $300$ scenarios and generate $30$ samples for each scenario.

We predict the number of individuals in the infectious compartment. The input is $(\beta,\varepsilon)$ pair and the output is the $100$ days' infection prediction. As the simulator is not spatiotemporal, we use the vanilla NP model with the global latent variable $z$.
For each epoch, we randomly select $10\%$  of the samples as context.  Implementation details are deferred to Appendix \ref{app:implementation}.

\textbf{Reaction Diffusion Model.}
The reaction-diffusion (RD) system  \citep{turing1990chemical} is a spatiotemporal model that  simulates how two chemicals might react to each other as they diffuse through a medium together. The simulation is based on initial pattern, feed rate ($\theta_0$), removal rate ($\theta_1$) and reaction between two substances. 
We use an  RD simulator  
to generate sequences from 0 to 500 timestamps, sampled every 100 timestamps, resulting into $5$ timestamps for each simulated sequence. Every timestamp is a 3D tensor $(2\times32\times32)$ with dimension $0$ corresponds to the two substances in the reaction and dimension $1,2$ are the image representation of the reaction diffusion processes. Each sequence is simulated with a unique feed rate $\theta_{0} \in [0.029,0.045]$ and kill rate $\theta_1 \in [0.055,0.062]$ combination. There are $200$  uniformly sampled  scenarios, corresponding to $(\theta_{0}, \theta_{1})$ combinations.


We implement STNP to mimic the reaction diffusion simulator with feed rate ($\theta_0$) and kill rate ($\theta_1$) as input. The initial state of the reaction is fixed. We use multiple convolutional layers  with a linear layer to encode the spatial data into latent space. We use an LSTM layer to encode the latent spatial data with $\theta_0$, $\theta_1$ to map the input-output pairs to hidden features $z_{1:5}$. With $(\theta_{0}, \theta_{1})$, and $z_{1:5}$ sampled from the posterior distribution, we use an LSTM layer and deconvolutional layers to simulate reaction diffusion sequence. For each epoch, we randomly select $20\%$ samples as context sequence.

\textbf{Heat Model.}
The model is to predict the spatial solution fields of the Heat equation \cite{olsen2011numerical}. The ground-truth data is generated from the standard numerical solver used in \cite{li2020deep}. The experiment setting also follows \cite{li2020deep}. The examples are generated by solvers running with $32 \times 32$ meshes. The corresponding output dimension is $1024$. The input consists of three parameters that control the thermal conductivity and the flux rate.



\textbf{Local Epidemic and Mobility model.}
The Local Epidemic and Mobility model (LEAM-US) is a stochastic, spatial, age-structured epidemic model based on a metapopulation approach which divides the US in more than 3,100 subpopulations, each one corresponding to a each US county or statistically equivalent entity.  Population size and county-specific age distributions reflect Census' annual resident population estimates for year 2019. We consider individuals divided into $10$ age groups. Contact mixing patterns are age-dependent and state specific and modeled considering contact matrices that describe the interaction of individuals in different social settings \citep{mistry2020inferring}.
LEAM-US integrates a human mobility layer, represented as a network, using both short-range (i.e., commuting) and long-range (i.e., flights) mobility data, see more details in Appendix \ref{app:leam_model}.

We separate data in California monthly to predict the 28 days' sequence from the 2nd to the 29th day of each month from March to December. Each $\theta$ includes the county-level parameters of LEAM-US and state level incidence and prevalence compartments. The total number of dimension in $\theta$ is $16,912$, see details in Appendix \ref{app:leam_model}.   Overall, there are $315$ scenarios in the search space, corresponding to $315$ different $\theta$ with total $16,254$ samples. We split $78\%$ of the data as the candidate set, and $11\%$ for validation and test.
For active learning, we use the candidate set as the search space.

We instantiate an STNP model to mimic an epidemic simulator that has $\theta$ at both county and state level and $x_t$ at the state level. We use county-level parameter $\theta$ together with a county-to-county mobility graph $A$ in California as input.  We use the DCGRU layer \citep{li2017diffusion} to encode the mobility graph in a GRU. We use a linear layer to map the county-level output to hidden features  at the state level. For both the state-level encoder and  decoder, we use multi-layer GRUs. For each epoch, we randomly select $20\%$ samples as context sequence.

\subsection{Offline Learning Performance}

We compared our proposed STNP with vanilla NP \cite{garnelo2018neural}, SNP \cite{singh2019sequential}, Masked Autoregressive Flow (MAF) \cite{papamakarios2017masked}, the RNN baseline with variational dropout (RNN) \cite{zhu2017deep}, and VAE-based deep surrogate model for multi-fidelity active learning (DMFAL) \cite{li2020deep}. The implementation details can be seen in Appendix \ref{app:implementation_offline}. The key innovation of STNP is the introduced temporal latent process. To ensure fair comparison, we modified baselines for the RD and Heat model by adding convolutional layers for data encoding and deconvolutional layers for sequence generation. For the LEAM-US model, we modified NP by adding the convolutional layers with diffusion convolution  \cite{li2018diffusion} to embed the graphs. Similarly, we modified SNP by replacing the convolutional layers with diffusion convolution. The rest baselines do not support LEAM-US surrogate modeling. Table \ref{tb:offline} shows the testing MAE of different NP models trained in an offline fashion. Our STNP significantly improves the performance and can accurately learn the simulator dynamics for all three experiments. 

\begin{figure*}[t]
    \centering
    \includegraphics[width=\linewidth]{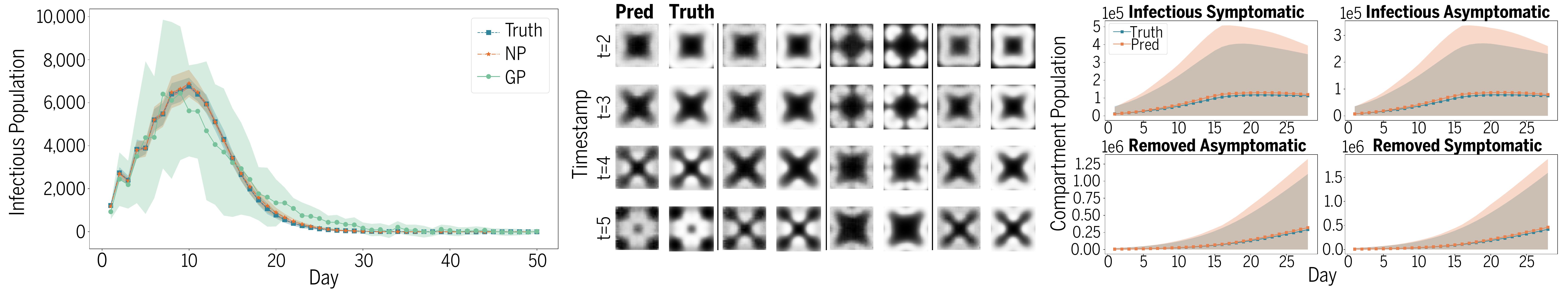}
    \caption{Prediction visualizations, Left: Accuracy and uncertainty quantification comparison between Neural Process (NP) and Gaussian process (GP) in SEIR simulator. Middle: STNP predictions for spatiotemporal patterns of substances in Reaction-Diffusion simulator. Right: STNP predictions for the number of individuals in Infectious and Removed compartments in LEAM-US simulator.}
    \label{fig:pred_visualization}
\end{figure*}

\begin{table*}
\centering
\caption{Surrogate model performance comparison using MAE in Reaction-Diffusion, Heat, and LEAM simulator (population divided by 1000).}
\begin{tabular}{c|c|c|c|c|c|c}
\toprule
Model &	NP	& SNP &	MAF & RNN & DMFAL &	STNP \\
\midrule
RD & 3.37 ± 0.18 & 3.11 ± 0.07 & 7.22 ± 0.66 & 3.44 ± 0.07 & 4.1 ± 0.02 & \textbf{2.84 ± 0.17}\\
 \midrule
 HEAT & 	1.05e-2 ± 1.1e-3 & 9.62e-3 ± 1e-3 & 2.1e-2 ± 4.8e-3 & 3.2e-2 ± 1.7e-3 & 1.35e-2 ± 3e-4 & \textbf{7.36e-3 ± 6.7e-4} \\
 \midrule 
 LEAM-US & 24.2 ± 5.9 & 21.8 ± 0.8 & -- & -- & -- & \textbf{6.3± 0.8} \\
\bottomrule

\end{tabular}
\label{tb:offline}
\end{table*}

\begin{figure*}[t]
    \centering
    \includegraphics[width=\linewidth]{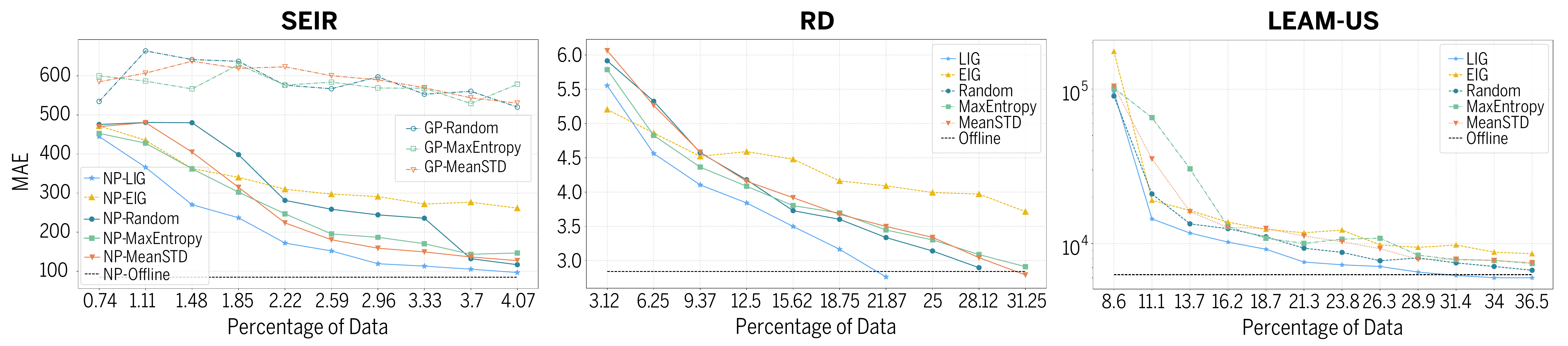}
    \caption{MAE loss versus the percentage of samples for Bayesian active learning. The Black
dash line shows the offline learning performance with the entire data set available for training. Left: GP and NP for SEIR. Middle: STNP for RD. Right: STNP for LEAM-US.}
    \label{fig:active_performance}
\end{figure*}

Figure \ref{fig:pred_visualization} left compares the NP and GP performance on one scenario in the held-out test set. It shows the ground truth and the predicted number of infectious population for the first $50$ days. We also include the confidence intervals (CI) with $5$ standard deviations for ground truth and NP predictions and $1$ standard deviation for GP predictions. We observe that NP fits the simulation dynamics better than GP for mean prediction. Moreover, NP has closer CIs to the truth, reflecting  the simulator's intrinsic uncertainty. GP shows larger CIs which represent the model's own uncertainty. Note that NP is much more flexible than GP and can scale easily to high-dimensional data. Figure \ref{fig:pred_visualization} middle indicates STNP can accurately predict various patterns corresponding to different $(\theta_0$, $\theta_1)$. 
This confirms that our STNP is able to capture the high-dimensional spatiotemporal dependencies in RD simulations. Figure \ref{fig:pred_visualization} right visualize the STNP predictions in four key compartments of a typical scenario with $R_{0}=3.1$ from March 2nd to March 29th. The confidence interval is plotted with $2$ standard deviations. We can see that both the mean and confidence interval of STNP predictions match the truth well. These two results demonstrate the promise that the  generative STNP model can serve as a deep surrogate model for RD and LEAM-US simulator.

\subsection{Active Learning}
\textbf{Implementation Details.}
We compare $6$ different acquisition functions with NP for SEIR model and STNP for RD, Heat, and LEAM-US model. For SEIR, the initial training dataset has 2 scenarios and we continue adding $1$ scenario per iteration to the training set until the test loss converges to the offline modeling performance. We also include GP with $3$ different acquisition functions and Sequential Neural Likelihood (SNL) (see Table \ref{tb:SEIR_GP}). For the RD and Heat model, all acquisition functions start with the same $5$ scenarios randomly picked from the training dataset. Then we continue adding $5$ scenarios per iteration to the training set until the test loss converges. Similarly, the LEAM-US model begins with $27$ training data and we continue adding $8$ scenarios per iteration to the training set until the validation loss converges. We measure the average performance over three random runs and report the MAE for the test set.

\textbf{Active Learning Performance.}
Figure \ref{fig:active_performance} shows the testing MAE versus the percentage of samples included for training. The percentage of data is linearly proportional to the overall running time. This figure shows our proposed LIG always has the best MAE performance until the convergence for SEIR, RD, and LEAM-US tasks. As shown in figure \ref{fig:active_performance} left, we compare different acquisition functions on both NP and GP for SEIR model. It shows none of the GP methods converge after selecting $4.07\%$ of the data for training while NP methods converge much faster. Our proposed acquisition function LIG is the most sample efficient in acquisition functions used for NP. It takes only $4.07\%$ of the data to converge and reach the NP offline performance, which uses the entire training set for training. Moreover, there is an enormous gap between LIG and EIG with respect to the active learning performance. This validates our theory that the uncertainty of the deep surrogate model is better measured on the latent space instead of the predictions. Similarly in figure \ref{fig:active_performance} middle and right, we compared LIG with other acquisition functions on STNP for RD and LEAM-US model. It shows LIG converges to the offline performance using only $21.87\%$ of data for RD experiment and $31.4\%$ of data for LEAM-US experiment. Therefore, it is consistent among all three experiments that our proposed LIG always has the best MAE performance until convergence. Notice that for figure \ref{fig:active_performance} right, it shows the log scale MAE versus the percentage of samples included for training. Detailed performance comparison including mean and standard deviation for all four tasks including Heat can be seen in Appendix \ref{app:active_learning}. Our proposed LIG also has the best MAE performance for the Heat task.


\begin{figure*}[t!]
    \centering
    \includegraphics[width=\linewidth]{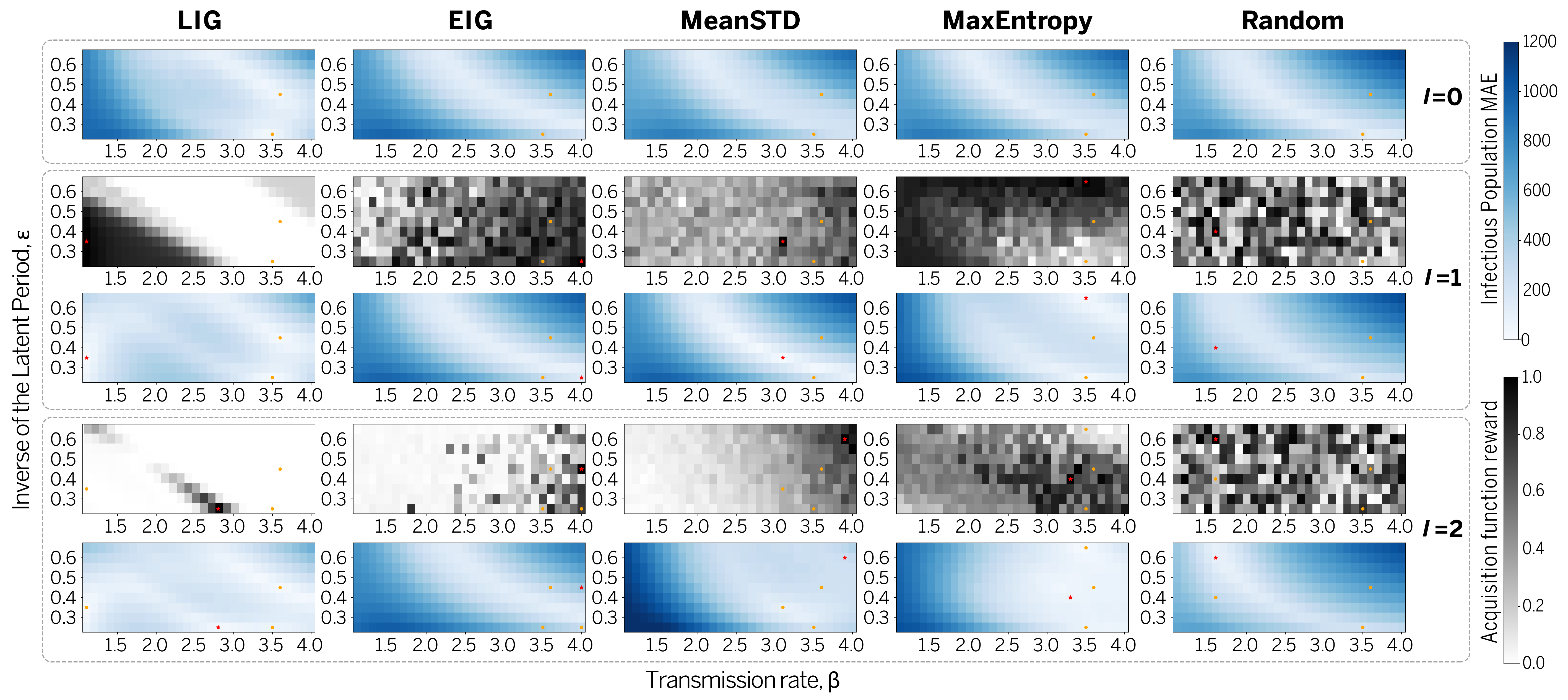}
    \caption{Acquisition function behavior visualization in SEIR model. For each iteration, top row is the current MAE mesh in infectious population for all $(\beta,\varepsilon)$ candidates.
    Bottom row is the acquisition function score. Yellow dots are existing parameters. Red stars are the newly selected parameters. }
    \label{fig:seir_acq_visual}
\end{figure*}

\textbf{Exploration Exploitation Trade-off.}
To understand the large performance gap for LIG vs. baselines, we visualize the values of test MAE and the acquisition function score for each Bayesian active learning iteration for SEIR model, shown in Figure \ref{fig:seir_acq_visual}.  For EIG, Mean STD, and Maximum Entropy, they all tend to exploit the region with large transmission rate for the first $2$ iterations. Including these scenarios makes the training set unbalanced. The MAE in the region with small transmission rate become worse after $2$ iterations. Meanwhile, Random  is doing pure exploration. The improvement of MAE performance is not apparent after $2$ iterations. Our proposed LIG 
is able to reach a balance by exploiting the uncertainty in the latent process and encouraging exploration. Hence, with a small number of iterations ($I=2$), it has already selected ``informative scenarios'' in the search space.

\section{Conclusion}
We propose a unified framework Interactive Neural Processes (\ours{}) for deep Bayesian active learning, that can seamlessly interact with existing stochastic simulators and accelerate simulation. Specifically,   we design STNP to approximate the underlying simulation dynamics. It infers the latent process which describes the intrinsic uncertainty of the simulator. We exploit this uncertainty and propose LIG as a powerful acquisition function in deep Bayesian active learning.  We perform a theoretical analysis and demonstrate that our approach reduces sample complexity compared with random sampling in high dimension. We also did extensive empirical evaluations on several complex real-world spatiotemporal simulators to demonstrate the superior performance of our proposed STNP and LIG. For the future work, we plan to leverage Bayesian optimization techniques to directly optimize for the target parameters  with auto-differentiation. 




\section*{Acknowledgments}
This work was supported in part by U.S. Department Of Energy, Office of Science, Facebook Data Science Research Awards, U. S. Army Research Office under Grant W911NF-20-1-0334, and NSF Grants \#2134274 and \#2146343, as well as NSF-SCALE MoDL (2134209) and NSF-CCF-2112665 (TILOS). M.C. and A.V. acknowledge support from grant HHS/CDC 5U01IP0001137.

\bibliographystyle{ACM-Reference-Format}
\bibliography{ref}

\clearpage

\newpage
\onecolumn
\appendix
\section{Theoretical Analysis}
\subsection{Latent Information Gain}
\label{app:latent_info_gain}
\setcounter{proposition}{0}
\begin{proposition}
The expected information gain (EIG) for Neural Process is equivalent to the KL divergence between the prior and posterior in the latent process, that is 
\begin{align}
\mathrm{EIG}(\hat{x}, \theta) :=\mathbb{E} [H(\hat{x}) - H(\hat{x}| z, \theta)] =
\mathbb{E}_{p(\hat{x}|\theta)} \lrb{ \mathrm{KL}\big( p( z| \hat{x}, \theta) \| p(z)  \big) }
\end{align}
\end{proposition}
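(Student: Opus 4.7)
The plan is to recognize the expected information gain as a mutual information between the prediction $\hat{x}_{1:T}$ and the latent process $z_{1:T}$ (conditioned on the design $\theta$), and then invoke the standard symmetry of mutual information to rewrite it in the ``expected KL from prior to posterior'' form. Concretely, the left-hand side
\[
\mathbb{E}\big[H(\hat{x}_{1:T}) - H(\hat{x}_{1:T}\mid z_{1:T}, \theta)\big]
\]
is, by definition, the mutual information $I(\hat{x}_{1:T}; z_{1:T}\mid \theta)$ under the joint $p(\hat{x}_{1:T}, z_{1:T}\mid \theta) = p(\hat{x}_{1:T}\mid z_{1:T},\theta)\,p(z_{1:T})$ induced by the neural process (the prior on $z_{1:T}$ does not depend on $\theta$ before the observation).

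First I would expand $I(\hat{x}_{1:T}; z_{1:T}\mid \theta)$ as the KL divergence between the joint and product of marginals,
\[
I(\hat{x}_{1:T}; z_{1:T}\mid \theta) = \mathbb{E}_{p(\hat{x}_{1:T}, z_{1:T}\mid \theta)}\!\left[\log \frac{p(\hat{x}_{1:T}, z_{1:T}\mid \theta)}{p(\hat{x}_{1:T}\mid \theta)\, p(z_{1:T})}\right].
\]
Then I would factor the log-ratio using Bayes' rule, writing $p(\hat{x}_{1:T}, z_{1:T}\mid \theta) = p(z_{1:T}\mid \hat{x}_{1:T},\theta)\, p(\hat{x}_{1:T}\mid \theta)$, which cancels the marginal $p(\hat{x}_{1:T}\mid \theta)$ and leaves $\log [p(z_{1:T}\mid \hat{x}_{1:T},\theta)/p(z_{1:T})]$ inside the expectation.

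Next I would apply the tower property, taking the inner expectation over $z_{1:T}\sim p(z_{1:T}\mid \hat{x}_{1:T},\theta)$ first to recognize a KL divergence, and the outer expectation over $\hat{x}_{1:T}\sim p(\hat{x}_{1:T}\mid \theta)$. This yields exactly
\[
\mathbb{E}_{p(\hat{x}_{1:T}\mid \theta)}\!\left[\mathrm{KL}\big(p(z_{1:T}\mid \hat{x}_{1:T}, \theta)\,\|\,p(z_{1:T})\big)\right],
\]
which is the claimed right-hand side.

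There is no substantive obstacle; this is the standard symmetry identity $I(X;Z) = \mathbb{E}_{p(x)}[\mathrm{KL}(p(z\mid x)\|p(z))]$ specialized to the NP generative model. The only care needed is book-keeping about conditioning: verifying that $p(z_{1:T})$ is indeed the relevant (context-conditioned) prior used in the ELBO and that $\theta$ enters only through the likelihood $p(\hat{x}_{1:T}\mid z_{1:T},\theta)$, so that it does not appear as a conditioning variable in the prior term. Once that is noted, the derivation is a two-line calculation and the identity follows.
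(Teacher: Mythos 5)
Your proposal is correct and follows essentially the same route as the paper: both rest on the factorization $p(\hat{x},z\mid\theta)=p(z\mid\hat{x},\theta)\,p(\hat{x}\mid\theta)$, the identification of the EIG with the mutual information $I(\hat{x};z\mid\theta)$, and the key observation that the prior satisfies $p(z\mid\theta)=p(z)$; the paper merely runs the identical chain of equalities from the right-hand side to the left, whereas you go from left to right. No gap.
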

\begin{proof}[Proof of Proposition~1]
The information gained in the latent process $z$, by selecting the parameter $\theta$ and generate $\hat{x}$ is the reduction in entropy from the prior to the posterior
$\mathrm{IG} (\theta) = H(\hat{x}) -  H(\hat{x}|z, \theta)$. 
Take the expectation of $\mathrm{IG}( \hat{x},\theta)$ under the marginal distribution, we obtain from the conditional independence of $z$ and $\theta$ that



\begin{align*}
&\mathbb{E}_{p(\hat{x}|\theta)} \lrb{ \mathrm{KL}\big( p( z| \hat{x}, \theta) \| p(z)  \big) } \\
&= \mathbb{E}_{p(\hat{x},z|\theta)} \bigg[ \log \frac{p(z|\hat{x}, \theta )}{ p( z)} \bigg]\\
&= \mathbb{E}_{p(\hat{x},z|\theta)} \bigg[ \log \frac{p(z|\hat{x}, \theta )}{ p( z|\theta)} \bigg]\\
&= \mathbb{E}_{p(\hat{x},z|\theta)} \big[ \log {p(z, \hat{x}, \theta )} - \log p(\hat{x}, \theta) - { \log p( z,\theta)} + \log p(\theta) \big] \\
&= \mathbb{E}_{p(\hat{x},z|\theta)} \big[ \log {p(\hat{x} |z, \theta )} - \log p(\hat{x} | \theta) \big] \\
&= \mathbb{E}_{p(z)} \bigg[ \mathbb{E}_{p(\hat{x}|z, \theta)} [\log {p(\hat{x} |z, \theta )} ] - \mathbb{E}_{p(\hat{x} | \theta)} [\log p(\hat{x} | \theta)] \bigg] \\
&= \mathbb{E}_{p(z)} [H(\hat{x} | \theta)-H(\hat{x} |z, \theta )]\\
&= \mathrm{EIG}(\hat{x}, \theta).
\end{align*}

\end{proof}
\subsection{Sample Efficiency of Active Learning}
\label{app:thm1_proof}
From the main text we know that in each round, the output random variable \begin{align}
X = \lrw{\Psi(\theta),z^*} + \epsilon.
\label{eq:model}
\end{align}
We further assume that the random noise $\epsilon$ is mean zero and $\sigma$-subGaussian.

Using this information, we treat $z$ as an unknown parameter and define a likelihood function so that $p(X|z;\theta)$ has good coverage over the observations:
\[
p(X_k|z;\theta_k) \propto\exp\lrp{ -\frac{1}{2\sigma^2} \lrp{X_k-\lrw{\Psi(\theta_k), z}}^2 }.
\]

Let the prior distribution over $z$ be $p(z|\theta_k) = p(z) \propto \exp\lrp{-\frac{m}{2\sigma^2} \lrn{z}^2}$.
Here we use $k$ instead of $(i)$ in the Algorithm \ref{algo:inter_np} to represent the number of iterations. We can form a posterior over $z$ in the $k$-th round:
\[
p(z|X_1,\theta_1,\dots,X_k,\theta_k) \propto \exp\lrp{ -\frac{m}{2\sigma^2} \lrn{z}^2 - \frac{1}{2\sigma^2} \sum_{s=1}^k \lrp{X_s - \lrw{\Psi(\theta_s), z}}^2 }.
\]
Focusing on the random variable $z\sim p(\cdot|X_1,\theta_1,\dots,X_k,\theta_k)$, the estimate of the hidden variable, we can express it at $k$-th round as:
\begin{align}
z_k = \hat{z}_k + \sigma V_k^{-1} \eta_k,
\label{eq:var_change}
\end{align}
where $\hat{z}_k = V_k^{-1} \sum_{s=1}^k X_s \Psi(\theta_s)$,
$V_k = m\mI + \sum_{s=1}^k \Psi(\theta_s) \Psi(\theta_s)^\rT$,
and $\eta_k$ is a standard normal random variable.

We can either choose action $\theta$ randomly or greedily.
A random choice of $\theta$ corresponds to taking
\begin{align}
\theta_k \sim \mathcal{N}\lrp{0,\mI},
\label{eq:random}
\end{align}
A greedy procedure is to choose action $\theta_k$ in the $k$-th round 
to optimize $\mathrm{KL}\lrp{p(z|\hat{x},\theta)\|p(z)} = \mathbb{E}_{p(z|\hat{x},\theta)} \lrp{\log \frac{p(z|\hat{x},\theta)}{p(z)}}$, where we denote the estimated output variable $\hat{x}$ given $\theta$ and $z$ as $\hat{x}=\lrw{\Psi(\theta),z}$.
This optimization procedure is equivalent to maximizing the variance of the prediction:
\begin{align}
\theta_k = \arg\max_{\theta\in\R^d} \rE_{z\sim p(\cdot|X_1,\theta_1,\dots,X_{k-1},\theta_{k-1})}\lrb{ \lrp{\lrw{\Psi(\theta), {z}} - \rE_{z\sim p(\cdot|X_1,\theta_1,\dots,X_{k-1},\theta_{k-1})}\lrw{\Psi(\theta), {z}} }^2 }.
\label{eq:greedy}
\end{align}
For both approaches, we assume that the features $\Psi(\theta)$ are normalized.

We compare the statistical risk of this approach with the random sampling approach.

Assume that the features are normalized, so that for all $\theta\in\R^d$, $\Psi(\theta)\in\mathbb{S}^{d-1}$.
Define a matrix $\mA_k\in\R^{d\times k}$ containing all the column vectors $\lrbb{\Psi(\theta_1),\dots,\Psi(\theta_k)}$.
We can then express the estimation error in the following lemma.
\begin{lemma}
The estimation error $\lrn{\hat{z}_k-z^*}_2$ can be bounded as follow.
\begin{align*}
\lrn{\hat{z}_k-z^*}_2
&\leq m \lrp{ m + \sigma_{\min}\lrp{\mA_k \mA_k^\rT} }^{-1} \cdot \lrn{z^*}_2 \\
&+\min\lrbb{1/\lrp{2\sqrt{m}},1/\lrp{ \sqrt{\sigma_{\min}\lrp{\mA_k \mA_k^\rT}} + \frac{m}{\sqrt{\sigma_{\min}\lrp{\mA_k \mA_k^\rT}}} }} \cdot \sigma \sqrt{d}.
\end{align*}
\label{lem:error}
\end{lemma}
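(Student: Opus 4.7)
The approach is to decompose $\hat{z}_k - z^*$ into a deterministic bias term coming from the Gaussian prior and a stochastic fluctuation term driven by the subGaussian noise, and then bound each piece separately via the triangle inequality.

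First I would substitute the observation model $X_s = \lrw{\Psi(\theta_s), z^*} + \epsilon_s$ into the posterior mean $\hat{z}_k = V_k^{-1}\sum_{s=1}^k X_s \Psi(\theta_s)$. Collecting the features as columns of $\mA_k$ and stacking the noises into a vector $\eta := (\epsilon_1,\dots,\epsilon_k)^\rT$, one gets $\sum_{s=1}^k X_s \Psi(\theta_s) = \mA_k \mA_k^\rT z^* + \mA_k \eta$, and since $V_k - m\mI = \mA_k \mA_k^\rT$, this yields the clean identity
\[
\hat{z}_k - z^* \;=\; -m\, V_k^{-1} z^* \;+\; V_k^{-1} \mA_k\,\eta.
\]
The first summand is the bias term and is handled in a single line: its norm is at most $m\,\lrn{V_k^{-1}}_{\mathrm{op}}\lrn{z^*}_2$, and since the smallest eigenvalue of $V_k$ equals $m + \sigma_{\min}(\mA_k \mA_k^\rT)$, this reproduces the first term of the lemma.

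The second summand requires two ingredients. For the operator norm of $M := V_k^{-1}\mA_k$, I would use the SVD $\mA_k = U\Sigma V^\rT$ to see that $M$ has singular values $s_i/(m+s_i^2)$, where $s_i$ are the singular values of $\mA_k$. The scalar function $x \mapsto x/(m+x^2)$ attains its global maximum $1/(2\sqrt m)$ at $x=\sqrt m$, while evaluating it at $x = \sqrt{\sigma_{\min}(\mA_k\mA_k^\rT)}$ produces the alternative expression $1/\bigl(\sqrt{\sigma_{\min}(\mA_k\mA_k^\rT)} + m/\sqrt{\sigma_{\min}(\mA_k\mA_k^\rT)}\bigr)$; taking the smaller of the two recovers the $\min$ in the lemma. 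For the subGaussian concentration, I would fix a unit vector $u \in \R^d$, observe that $u^\rT M\eta = (M^\rT u)^\rT\eta$ is $\sigma \lrn{M^\rT u}_2$-subGaussian and hence $\sigma\lrn{M}_{\mathrm{op}}$-subGaussian, and then apply a standard $\tfrac12$-net argument over the unit sphere of $\R^d$ together with a union bound to deduce $\lrn{M\eta}_2 \le C\,\sigma\,\lrn{M}_{\mathrm{op}}\,\sqrt{d + \log(1/\delta)}$ with probability at least $1-\delta$. Combining the bias and variance bounds by the triangle inequality yields the stated inequality.

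\textbf{Main obstacle.} The only nontrivial step is the variance term, specifically obtaining the $\sqrt d$ dependence rather than the naive $\sqrt k$ one would get from $\lrn{M\eta}_2 \le \lrn{M}_{\mathrm{op}}\lrn{\eta}_2$ with $\lrn{\eta}_2 = \Theta(\sigma\sqrt k)$. This distinction is critical because the number of samples $k$ generally grows well past $d$, so the crude bound would destroy the scaling needed by Theorem~\ref{thm:theorm}. Exploiting the fact that $M\eta$ lives in $\R^d$ via the net/union-bound argument (or equivalently a Hanson--Wright-type estimate) is what preserves the correct dimension dependence, and keeping track of constants so that the $\log(1/\delta)$ factor can be absorbed into the high-probability statement is where the care is required. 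Everything else reduces to routine linear algebra.
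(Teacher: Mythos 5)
Your proposal matches the paper's proof essentially step for step: the same decomposition $\hat{z}_k - z^* = -m\,V_k^{-1}z^* + V_k^{-1}\mA_k\eta$, the same bound $m\lrn{V_k^{-1}}_2\lrn{z^*}_2 = m\lrp{m+\sigma_{\min}(\mA_k\mA_k^\rT)}^{-1}\lrn{z^*}_2$ for the bias, and the same analysis of the singular values $s_i/(m+s_i^2)$ of $V_k^{-1}\mA_k$ via the scalar function $x\mapsto x/(m+x^2)$ for the noise term. The only cosmetic difference is how the $\sigma\sqrt{d}$ (rather than $\sigma\sqrt{k}$) is extracted: the paper inserts a rank-$d$ projection $\bar{\Lambda}_k$ into the SVD and bounds $\lrn{\bar{\Lambda}_k W_k \E_k}_2 \leq \sigma\sqrt{d}$ using that $W_k\E_k$ is still $\sigma$-subGaussian, whereas you run a net argument over the unit sphere of $\R^d$ --- both are the standard norm bound for a $d$-dimensional subGaussian vector, and your observation that this is the one step where care is needed is exactly right.
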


We now analyze random sampling of $\theta$ versus greedy search for $\theta$.

If the feature map $\Psi(\cdot) = \mathrm{id}$, then from random matrix theory, we know that for $\theta$ randomly sampled from a normal distribution and normalized to $\lrn{\theta}=1$, $\sigma_{\min}\lrp{\frac{1}{k}\mA_k \mA_k^\rT} $ will converge to $ \lrp{ \sqrt{1/k}-\sqrt{1/d} }^2$ for large $k$, which is of order $\Omega(1/d)$.
This will lead to an appealing risk bound for $\lrn{\hat{z}_k-z^*}_2$ on the order of $\mathcal{O}\lrp{ d/\sqrt{k} }$.

However, in high dimension, this feature map is often far from identity.
In the proof of Theorem~1 below, we demonstrate that even when $\Psi(\cdot)$ is simply a linear random feature map, with i.i.d. normal entries, random exploration in $\theta$ can lead to deteriorated error bound.
This setting is motivated by the analyses in wide neural networks, where the features learned from gradient descent are close to those generated from random initialization~\cite{Lee2019NTK,Andrea2019}.

\setcounter{theorem}{0}
\begin{theorem}[Formal statement]
Assume that the noise $\epsilon$ in~\eqref{eq:model} is $\sigma$-subGaussian.

For a normalized linear random feature map $\Psi(\cdot)$, greedily optimizing the KL divergence, $\mathrm{KL}\lrp{p(z|\hat{x},\theta)\|p(z)}$ (or equivalently the variance of the posterior predictive distribution defined in equation~\eqref{eq:greedy}) in search of $\theta$ will lead to an error $\lrn{\hat{z}_k-z^*}_2 = \mathcal{O}\lrp{ {\sigma d}/{\sqrt{k}} }$ with high probability.

On the other hand, random sampling of $\theta$ following~\eqref{eq:random} will lead to $\lrn{\hat{z}_k-z^*}_2 = \mathcal{O}\lrp{ {\sigma d^{2}}/{\sqrt{k}} }$ with high probability.
\end{theorem}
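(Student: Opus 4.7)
The plan is to use Lemma~\ref{lem:error} to reduce the theorem to a lower bound on $\sigma_{\min}(\mA_k\mA_k^\rT)$, then bound this quantity separately under greedy and random $\theta$-selection. Once $\sigma_{\min}(\mA_k\mA_k^\rT) \gg m$, the first term in Lemma~\ref{lem:error} is negligible and the error is driven by the second term, which is at most $\sigma\sqrt{d}/\sqrt{\sigma_{\min}(\mA_k\mA_k^\rT)}$. The two target rates $\mathcal{O}(\sigma d/\sqrt{k})$ and $\mathcal{O}(\sigma d^2/\sqrt{k})$ correspond exactly to establishing $\sigma_{\min}(\mA_k\mA_k^\rT) = \Omega(k/d)$ in the greedy case and $\sigma_{\min}(\mA_k\mA_k^\rT) = \Omega(k/d^3)$ in the random case, so the remainder of the proof splits into these two eigenvalue estimates.

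For the greedy case, I would first observe that since $\Psi(\theta) = W\theta/\lrn{W\theta}$ for a full-rank random matrix $W$, the image $\{\Psi(\theta):\theta\in\R^d\}$ covers the entire unit sphere $\mathbb{S}^{d-1}$. The objective in~\eqref{eq:greedy} is equal to $\sigma^2\Psi(\theta)^\rT V_{k-1}^{-1}\Psi(\theta)$, so each greedy step picks the bottom eigenvector of $V_{k-1}$; this is the classical greedy D-optimal design on the sphere. I would then run a potential-function argument: $\Tr(V_k) = md + k$ grows linearly in $k$, while greedy maximization of $\log\det V_k = \log\det V_{k-1} + \log\bigl(1 + \Psi(\theta_k)^\rT V_{k-1}^{-1}\Psi(\theta_k)\bigr)$ keeps the eigenvalues balanced, so that $\sigma_{\min}(V_k) \geq c k/d$. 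Subtracting $m\mI$ gives $\sigma_{\min}(\mA_k\mA_k^\rT) = \Omega(k/d)$ and, via Lemma~\ref{lem:error}, the claimed $\mathcal{O}(\sigma d/\sqrt{k})$ rate.

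For the random case, the columns of $\mA_k$ are i.i.d.\ copies of $\Psi(\theta)$ with $\theta\sim\mathcal{N}(\vzero,\mI)$. I would compute the population covariance $\Sigma := \rE[\Psi(\theta)\Psi(\theta)^\rT]$ and argue $\sigma_{\min}(\Sigma) = \Theta(1/d^3)$: one factor of $1/d$ comes from $\sigma_{\min}(W W^\rT) = \Omega(1/d)$ for a square Gaussian $W$ (Edelman's estimate), and two more from the normalization $\lrn{W\theta}^2 = \Theta(d^2)$ appearing in the denominator of $\Psi(\theta)\Psi(\theta)^\rT$. A matrix Bernstein bound, applied after conditioning on the high-probability event that $W$ is well-conditioned, then gives $\sigma_{\min}(\mA_k\mA_k^\rT) \geq c k/d^3$ with high probability once $k$ is sufficiently large, which through Lemma~\ref{lem:error} yields the $\mathcal{O}(\sigma d^2/\sqrt{k})$ rate.

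The main obstacle I anticipate is the greedy analysis: while the $\log\det$ potential argument is classical in optimal experimental design, converting its growth into a non-asymptotic lower bound on $\sigma_{\min}$ requires careful book-keeping of the prior regularization $m\mI$ and of the fact that each greedy step only adds $1$ to $\Tr(V_k)$. A secondary difficulty in the random case is the coupling between $W\theta$ and its normalizing factor $\lrn{W\theta}$, which I would resolve by conditioning on the high-probability event that $\lrn{W\theta}^2$ concentrates within a constant factor of its mean before invoking the matrix Bernstein bound.
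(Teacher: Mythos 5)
Your proposal follows essentially the same route as the paper's proof: both reduce the theorem to Lemma~\ref{lem:error} and then establish exactly the two eigenvalue bounds you identify, $\sigma_{\min}\lrp{\mA_k\mA_k^\rT}=\Omega(k/d)$ for greedy selection and $\Omega(k/d^3)$ for random sampling, after which the stated rates follow from the second term of the lemma. The only divergences are in how those two estimates are obtained. For the greedy case, the paper computes the predictive variance as $\sigma^2\Psi(\theta)^\rT V_{k-1}^{-2}\Psi(\theta)$ (its posterior parametrization $z_k=\hat{z}_k+\sigma V_k^{-1}\eta_k$ yields $V_{k-1}^{-2}$ rather than your $V_{k-1}^{-1}$, though the argmax over unit vectors is the same bottom eigenvector of $V_{k-1}$) and then argues directly that the greedy process cycles through orthonormal bases of $\R^d$, so every eigenvalue of $\mA_k\mA_k^\rT$ is at least $\lfloor k/d\rfloor$; your $\log\det$ potential argument is the more standard optimal-design route to the same eigenvalue balance, and both versions rely on the same implicit surjectivity of the normalized feature map onto the sphere. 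For the random case, the paper uses the submultiplicative bound $\sigma_{\min}\lrp{\Psi\Theta_k\Theta_k^\rT\Psi^\rT}\geq\sigma_{\min}^2(\Psi)\,\sigma_{\min}\lrp{\Theta_k\Theta_k^\rT}$, the fact that the condition number of a square Gaussian matrix is $\Theta(d)$ (giving the $1/d^2$ factor), and Marchenko--Pastur for $\sigma_{\min}\lrp{\Theta_k\Theta_k^\rT}=\Omega(k/d)$; your population-covariance-plus-matrix-Bernstein argument reaches the same $\Omega(k/d^3)$ with the factors accounted for slightly differently ($1/d$ from $\sigma_{\min}(WW^\rT)$ and $1/d^2$ from the normalization $\lrn{W\theta}^2=\Theta(d^2)$) and is, if anything, the cleaner way to make the concentration step non-asymptotic. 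Neither difference changes the conclusion.
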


\begin{proof}[Proof of Theorem~1]
For a linear random feature map, we can express $\Psi(\theta) = \Psi \theta$, where entries in $\Psi\in\R^{d\times d}$ are i.i.d. normal.
The entries of $\Psi \theta$ are then normalized.

\begin{itemize}
    \item 
    For random exploration of $\theta$, the matrix containing the feature vectors becomes $\mA_k = \Psi \Theta_k$, where matrix $\Theta_k\in\R^{d\times k}$ collects all the $k$ column vectors of $\{\theta_1,\dots,\theta_k\}$.
Then $\mA_k\mA_k^\rT = \Psi \Theta_k \Theta_k^\rT \Psi^\rT$.
From random matrix theory, we know that the condition number of $\Psi$ is equal to $d$ with high probability~\cite{random_matrix_cond}.
Hence for normalized $\Psi$ and $\theta$, $\sigma_{\min} \lrp{ \Psi \Theta_k \Theta_k^\rT \Psi^\rT } \geq \sigma_{\min}^2\lrp{\Psi} \sigma_{\min} \lrp{ \Theta_k \Theta_k^\rT } = \frac{1}{d^2} \sigma_{\min}\lrp{\Theta_k\Theta_k^\rT}$.
The inequality holds because the smallest singular value is the inverse of the norm of the inverse matrix.

We then use the fact from random matrix theory that for normalized random $\theta$, the asymptotic distribution of the eigenvalues of $\frac{1}{k} \Theta_k \Theta_k^\rT$ follow the (scaled) Marchenko–Pastur distribution, which is supported on $\lambda\in\lrb{ \lrp{ \sqrt{1/k}-\sqrt{1/d} }^2, \lrp{ \sqrt{1/k}+\sqrt{1/d} }^2 }$, where the $1/d$ scaling comes from the fact that $\theta$ is normalized~\cite{random_matrix_eig}.
Hence for large $k$, $\sigma_{\min} \lrp{\Theta_k\Theta_k^\rT} \geq \lrp{1-\sqrt{k/d}}^2$ with high probability.
This combined with the previous paragraph yields that for the random feature model, 
$$\sigma_{\min} \lrp{\mA_k\mA_k^\rT} = \Omega\lrp{ \frac{1}{d^2} \lrp{1-\sqrt{k/d}}^2 }$$ 
with high probability.
Plugging this result into Lemma~\ref{lem:error}, we obtain that the error $\lrn{\hat{z}_k-z^*}_2$ for random exploration in the space of $\theta$ is of order $\mathcal{O}\lrp{d^2/\sqrt{k}}$.

\item
We then analyze the error associated with greedy maximization of the posterior predictive variance.
We first note that the variance of the posterior predictive distribution in equation~\eqref{eq:greedy} can be expressed as follows using equation~\eqref{eq:var_change}:
\begin{align}
\rE\lrb{ \lrp{\lrw{\Psi(\theta), z} - \rE\lrw{\Psi(\theta), z} }^2 }
= \sigma^2 \rE\lrb{ \lrp{\lrw{\Psi(\theta),V_{k-1}^{-1}\eta_k}}^2 }
= \sigma^2 \Psi(\theta)^\rT V_{k-1}^{-2} \Psi(\theta),
\label{eq:quadratic}
\end{align}
where the expectations are with respect to $z\sim p(\cdot|X_1,\theta_1,\dots,X_{k-1},\theta_{k-1})$.

We perform a singular value decomposition $\mA_k = U_k \Lambda_k W_k$.
Then $\sum_{s=1}^k \Psi(\theta_s) \Psi(\theta_s)^\rT = \mA_k \mA_k^\rT = U_k \Lambda_k \Lambda_k^\rT U_k^\rT$, and that $V_{k-1}^{-2} = \lrp{m\mI+A_{k-1} A_{k-1}^\rT}^{-2} = U_{k-1} \lrp{m\mI+\Lambda_{k-1} \Lambda_{k-1}^\rT}^{-2} U_{k-1}^\rT$.
Via this formulation, we see that maximizing ${\Psi(\theta)^\rT V_{k-1}^{-2} \Psi(\theta)}$ in equation~\eqref{eq:quadratic} to choose $\theta_k$ is equivalent to choosing $\Psi(\theta_k) = \lrp{U_{k-1}}^\rT_{\lrp{\cdot,l}}$, where $l=\arg\min_{i\in\{1,\dots,d\}} \lrp{\Lambda_{k-1}\Lambda_{k-1}^\rT}_{\lrp{i,i}}$.
In words, when we use greedy method and maximize the variance of the prediction, it corresponds to taking $\Psi(\theta_k)$ in the direction of the smallest eigenvector of $V_{k-1}$.

Since every $\Psi(\theta)$ is normalized and we initialize uniformly: $V_0 = m\mI$, the process is equivalent to scanning the orthogonal spaces of normalized vectors in $\R^d$ for $\lfloor {k}/{d} \rfloor$ times.
For large $k$, entries in $\Lambda_k \Lambda_k^\rT$ are approximately uniform and are all larger than or equal to $\lfloor {k}/{d} \rfloor$.
Then $\sigma_{\min}\lrp{\mA_k \mA_k^\rT} = \Omega(k/d)$.
Plugging into the bound of Lemma~\ref{lem:error}, we obtain that
\[
\lrn{\hat{z}_k-z^*}_2
= \mathcal{O}\lrp{ \frac{\sigma d}{\sqrt{k}} }.
\]
\end{itemize}
\end{proof}

\begin{proof}[Proof of Lemma~\ref{lem:error}]
We first express the estimate $\hat{z}_k$ as follows.
\begin{align*}
\hat{z}_k
= V_k^{-1} \sum_{s=1}^k X_s \Psi(\theta_s)
= V_k^{-1} \sum_{s=1}^k \Psi(\theta_s) \Psi(\theta_s)^\rT z^* + V_k^{-1} \sum_{s=1}^k \epsilon_s \Psi(\theta_s).
\end{align*}
Then
\begin{align*}
\lrn{\hat{z}_k-z^*}_2
&= \lrn{ \lrp{ V_k^{-1} \sum_{s=1}^k \Psi(\theta_s) \Psi(\theta_s)^\rT - \mI} z^* + V_k^{-1} \sum_{s=1}^k \epsilon_s \Psi(\theta_s) }_2 \\
&\leq \underbrace{\lrn{\lrp{ V_k^{-1} \sum_{s=1}^k \Psi(\theta_s) \Psi(\theta_s)^\rT - \mI} z^*}_2}_{T_1} + \underbrace{\lrn{V_k^{-1} \sum_{s=1}^k \epsilon_s \Psi(\theta_s)}_2}_{T_2}.
\end{align*}
Define a matrix $\mA_k\in\R^{d\times k}$ containing all the column vectors $\lrbb{\Psi(\theta_1),\dots,\Psi(\theta_k)}$ and perform a singular value decomposition $\mA_k = U_k \Lambda_k W_k$.
Then $\sum_{s=1}^k \Psi(\theta_s) \Psi(\theta_s)^\rT = \mA_k \mA_k^\rT = U_k \Lambda_k \Lambda_k^\rT U_k^\rT$, and $V_k = m\mI + \mA_k \mA_k^\rT$,
We further define vector $\E_k\in\R^{s}$ where $(\E_k)_{s} = \epsilon_s$.
We use this definition to simplify the two terms further.

For term $T_1$, 
\begin{align*}
\lrn{\lrp{ V_k^{-1} \sum_{s=1}^k \Psi(\theta_s) \Psi(\theta_s)^\rT - \mI} z^*}_2
&= m \lrn{ V_k^{-1} z^* }_2 \\
&\leq m \lrn{ V_k^{-1} }_2 \cdot \lrn{z^*}_2 \\
&= m \lrp{ m + \sigma_{\min}\lrp{\mA_k \mA_k^\rT} }^{-1} \cdot \lrn{z^*}_2.
\end{align*}

For term $T_2$, we define a diagonal matrix $\Bar{\Lambda}_k\in\R^{k\times k}$ which satisfies $\lrp{\Bar{\Lambda}_k}_{i,i}=1$ if $i\leq d$ and $\lrp{\Bar{\Lambda}_k}_{i,i}=0$ if $i> d$, when $k>d$.
The following bound on $T_2$ can be achieved.
\begin{align*}
\lrn{V_k^{-1} \sum_{s=1}^k \epsilon_s \Psi(\theta_s)}_2
&= \lrn{V_k^{-1} \mA_k \E_k}_2 \\
&= \lrn{ U_k \lrp{ \Lambda_k \Lambda_k^\rT + m \mI }^{-1} U_k^\rT U_k \Lambda_k \Bar{\Lambda}_k W_k \E_k}_2 \\
&\leq \lrn{U_k \lrp{ \Lambda_k \Lambda_k^\rT + m \mI }^{-1} \Lambda_k}_2 \cdot \lrn{\Bar{\Lambda}_k W_k \E_k}_2 \\
&= \lrn{\lrp{ \Lambda_k \Lambda_k^\rT + m \mI }^{-1} \Lambda_k}_2 \cdot \lrn{\Bar{\Lambda}_k W_k \E_k}_2 \\
&\leq \min\lrbb{1/\lrp{2\sqrt{m}},1/\lrp{ \sqrt{\sigma_{\min}\lrp{\mA_k \mA_k^\rT}} + \frac{m}{\sqrt{\sigma_{\min}\lrp{\mA_k \mA_k^\rT}}} }} \cdot \lrn{\Bar{\Lambda}_k W_k \E_k}_2.
\end{align*}
Assuming that noise $\epsilon_s$ is $\sigma$-subGaussian, then so is $W_k \E_k$ since $W_k$ is a unitary matrix.
Multiplied by the diagonal matrix $\Bar{\Lambda}_k$ which has zero, $\lrn{\Bar{\Lambda}_k W_k \E_k}_2 \leq \sigma \sqrt{d}$.
Therefore, 
\[
\lrn{V_k^{-1} \sum_{s=1}^k \epsilon_s \Psi(\theta_s)}_2 
\leq \min\lrbb{1/\lrp{2\sqrt{m}},1/\lrp{ \sqrt{\sigma_{\min}\lrp{\mA_k \mA_k^\rT}} + \frac{m}{\sqrt{\sigma_{\min}\lrp{\mA_k \mA_k^\rT}}} }} \cdot \sigma \sqrt{d}.
\]
\end{proof}

\section{Experiment Details}
\subsection{SEIR Model}
\label{app:seir_model}
Our SEIR simulator is a simple stochastic, discrete, chain-binomial compartmental model. In this model, susceptible individuals ($S$) become exposed ($E$) through interactions with infectious individuals ($I$). Exposed individuals which are infected but not yet infectious transition to infectious compartment at a rate $\varepsilon$ that is inversely proportional to the latent period of the disease. Lastly, infectious individuals transition to the removed compartment at a rate $\mu$ which is inversely proportional to the infectious period. Removed individuals ($R$) are assumed to be no longer infectious and they are to be considered either recovered or dead. All transitions are simulated by randomly drawn from a binomial distribution.

\subsection{LEAM-US Model}
\label{app:leam_model}
LEAM-US integrates a human mobility layer, represented as a network, using both short-range (i.e., commuting) and long-range (i.e., flights) mobility data. 
Commuting flows between counties are obtained from the 2011-2015 5-Year ACS Commuting Flows survey and properly adjusted to account for differences in population totals since the creation of the dataset. Instead, long-range air traveling flows are quantified using origin-destination daily passenger flows between airport pairs as reported by the Official Aviation Guide (OAG) and IATA databases (updated in 2021) \citep{OAG,IATA}. In addition, flight probabilities are age and country specific.

The model is initialized using a multi-scale modeling approach that utilizes GLEAM, the Global and Epidemic Mobility model \citep{balcan2009multiscale,balcan2010modeling,tizzoni2012real,zhang2017spread,chinazzi2020effect,davis2020estimating}, to simulate a set of 500 different initial conditions for LEAM-US starting on February 16th, 2020. The disease dynamics are modeled using a classic SEIR-like model and initial conditions are determined using the Global and Epidemic Mobility model \citep{balcan2009multiscale,balcan2010modeling,tizzoni2012real,zhang2017spread} calibrated to realistically represent the evolution of the COVID-19 pandemic \citep{chinazzi2020effect,davis2020estimating}.
Lastly, travel restrictions, mobility reductions, and government interventions are explicitly modeled to mimic the real timeline of interventions of the events that occurred during the COVID-19 pandemic. 

\subsection{Spatiotemporal NP Model}

\begin{figure}[h]
  \centering
  \includegraphics[width=0.7\linewidth,trim={0 0 0 50}]{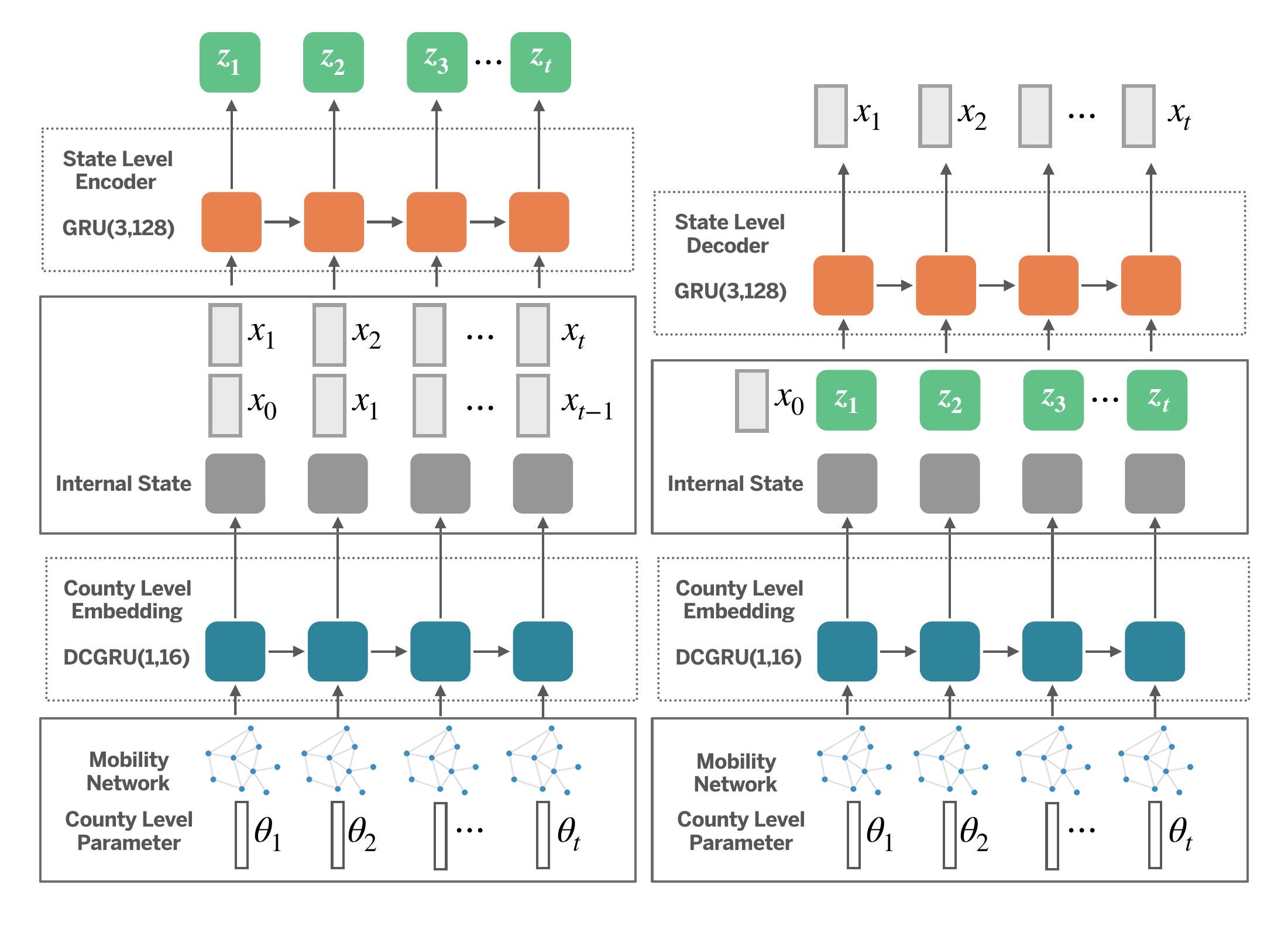}
  \caption{Visualization of the STNP model architecture. For both the encoder and the decoder, we use a  diffusion convolutional GRU (DCGRU) \cite{li2018diffusion} to capture spatiotemporal dependency. }
  \label{fig:snp}
\end{figure}

As shown in Figure \ref{fig:snp}, our model has $\theta$ at both county and state level and $x_t$ at the state level. We use county-level parameter $\theta$ together with a county-to-county mobility graph $A$ as input.  We use the DCGRU layer \citep{li2017diffusion} to encode the graph in a GRU. We use a linear layer to map the county-level output to hidden features  at the state level. For both the state-level encoder and  decoder, we use multi-layer GRUs. 


The input $\theta_{1:t}$ is the county-level parameters for LEAM-US with a dimension of 10. The county level embedding uses 1 layer DCGRU with a width of 16. The internal state is at state level with dimension of 16. The state level encoder and decoder use 3 layer GRUs with width of 128. The dimension of the latent process $z_{1:t}$ is 32. The dimension of output $x_{1:t}$ is 24, including the incidence and prevalence for 12 compartments. We trained STNP model for $500$ steps with learning rate fixed at $10^{-3}$ using Adam optimizer. We perform early stopping with $50$ patience for both offline learning and Bayesian active learning.

\subsection{Acquisition Function}
\label{app:acquisition}
\textbf{Maximum Mean STD.} Mean STD \citep{gal2016dropout} is a heuristic used to  estimate  the model uncertainty. For each augmented parameter $\theta$, we sample multiple $z_{1:T}$  and generate a set of predictions $\{\hat{x}_{1:T}\}$. For a length $T$ sequence with dimension $D$,  we compute the standard deviation  $\sigma_{t,d}$ for time step $t$ and feature $d$. Mean STD computes  $\bar{\sigma} = \frac{1}{TD}\sum_{t=1}^T\sum_{d=1}^D \sum \sigma_{t,d}$ for each parameter $\theta$. We select the  $\theta$  with the maximum $\bar{\sigma}$.  Empirically, we found  that Mean STD often becomes over-conservative and tends to explore less. 

\textbf{Maximum Entropy.} Maximum entropy computes the maximum predictive entropy as $H(\hat{x}) = - \mathbb{E}[\log p(\hat{x}_{1:T})]$. In general,  entropy is intractable for continuous output. Our NP model implicitly  assumes the predictions follow a multivariate Gaussian, which allows us to compute the differential entropy \citep{jaynes1957information}.    We follow the same procedure as Mean STD to estimate the empirical  covariance $\Sigma \in \mathbb{R}^{TD\times TD}$and compute the differential entropy for each parameter as $H = \frac{1}{2}\ln{|\Sigma|}+\frac{TD}{2}(1+\ln{2\pi})$. We select the parameter $\theta$  with the maximum entropy. 

\subsection{Implementation Details.}
\label{app:implementation}
For both GP and \ours{} model mimicking SEIR simulation, we ran experiments using CPU. No GPU accelerator is needed for this simple model. It takes 5 hours to converge.  For \ours{} model mimicking LEAM-US simulation, we ran experiments with GEFORCE RTX 2080. It takes one day for the training to converge. For all experiments, we run with three different random seeds. 

We implement STNP to mimic the reaction diffusion simulator with feed rate ($\theta_0$) and kill rate ($\theta_1$) as input. The initial state of the reaction is fixed. We use multiple convolutional layers  with a linear layer to encode the spatial data into latent space. We use an LSTM layer to encode the latent spatial data with $\theta_0$, $\theta_1$ to map the input-output pairs to hidden features $z_{1:5}$. With $(\theta_{0}, \theta_{1})$, and $z_{1:5}$ sampled from the posterior distribution, we use an LSTM layer and deconvolutional layers to simulate reaction diffusion sequence. For each epoch, we randomly select $20\%$ samples as context sequence.

\subsection{Implementation Details for MAF,  SNL, RNN, and DMFAL.}
\label{app:implementation_offline}
We use the likelihood-free inference code \cite{durkan2020contrastive} to implement Masked Autoregressive Flow (MAF) model and Sequential Neural Likelihood (SNL) framework. We use the VAE-based deep surrogate model code for Deep Multi-fidelity Active Learning (DMFAL) \cite{li2020deep} to implement DMFAL. For DMFAL, we set the number of fidelity levels to $1$ to meet our task setting. Note that we only use DMFAL model for offline learning test as their proposed acquisition function is EIG applied to multiple fidelity levels, which is equivalent to EIG once we reduce the number of fidelity levels to $1$. We also include the RNN baseline with variational dropout, which only uses the NP decoder for surrogate modeling. We use this baseline for ablation study to show the effectiveness to include the latent processes of Neural processes model.

We follow the same hyperparameter setting used in DMFAL \citep{li2020deep} for the standard Heat simulation task. For other experiments, we tune the hyperparameters of baseline models including the learning rate and the hidden state size to optimize their performance.

\section{Additional Results}

\begin{table*}[t!]
\centering
\caption{Performance comparison of different acquisition functions in NP model for SEIR simulator}
\resizebox{\textwidth}{!}{
\begin{tabular}{c|ccccc}
\toprule
Percentage of samples & LIG & EIG & Random & MeanSTD & MaxEntropy \\
\midrule
$1.11\%$ & $\textbf{365.87}\pm142.87$ & $435.08\pm32.38$ & $480.68\pm5.24$ & $480.22\pm12.63$ & $427.73\pm61.36$\\
 \midrule
$1.85\%$ & $\textbf{236.9}\pm50.6$ & $340.27\pm30.84$ & $398.33\pm131.05$ & $314.75\pm111.42$ & $302.24\pm119.84$\\
 \midrule
$2.96\%$ & $\textbf{119.26}\pm14.22$ & $291.15\pm10.60$ & $244.27\pm148.89$ & $158.94\pm36.6$ & $186.88\pm57.48$\\
 \midrule
$4.07\%$ & $\textbf{96.73}\pm17.07$ & $261.60\pm7.78$ & $116.8\pm9.1$ & $127.36\pm27.97$ & $146.72\pm26.06$\\
\bottomrule
\end{tabular}}
\label{tb:SEIR_INP}
\end{table*}

\begin{table*}[t!]
\centering
\caption{Performance comparison of different acquisition functions in GP model and SNL model for SEIR simulator}
\resizebox{0.88\textwidth}{!}{
\begin{tabular}{c|cccc}
\toprule
Percentage of samples & Random & MeanSTD & MaxEntropy & SNL\\
\midrule
$1.11\%$ & $663.76\pm46.36$ & $606.81\pm6.89$ & $586.25\pm58.44$ & $707.61\pm44.42$\\
 \midrule
$1.85\%$ & $637.12\pm13.45$ & $619.15\pm36.42$ & $628.54\pm71.34$ & $669.03\pm73.19$\\
 \midrule
$2.96\%$ & $597.3\pm19.59$ & $589.72\pm24.9$ & $568.84\pm19.05$ & $668.67\pm72.42$\\
 \midrule
$4.07\%$ & $519.98\pm17.86$ & $530.07\pm 32.95$ & $578.34\pm68.7$ & $685.28\pm53.00$\\
\bottomrule
\end{tabular}}
\label{tb:SEIR_GP}
\end{table*}

\begin{table*}[t!]
\centering
\caption{Active learning performance comparison using MAE on Heat simulator.}
\resizebox{1.\textwidth}{!}{
\begin{tabular}{c|ccccc}
\toprule
Percentage of samples & LIG & EIG & Random & MeanSTD & MaxEntropy \\
\midrule
$5.21\%$ & \textbf{1.55e-2} $\pm$ 1.9e-3 & 1.64e-2 $\pm$ 1.9e-3 & 1.74e-2 $\pm$ 2.2e-3 & 1.77e-2 $\pm$ 2e-3 & 1.83e-2 $\pm$ 1.1e-3\\
\midrule
$7.81\%$ & \textbf{1.33e-2} $\pm$ 1.7e-3 & 1.56e-2 $\pm$ 1.1e-3 & 1.49e-2 $\pm$ 3.3e-3 & 1.60e-2 $\pm$ 3.7e-3  & 1.58e-2 $\pm$ 4e-4\\
\midrule
$10.42\%$ & \textbf{1.02e-2} $\pm$ 3.4e-3 & 1.38e-2 $\pm$ 1.6e-3 & 1.23e-2 $\pm$ 1.5e-3 & 1.30e-2 $\pm$ 2.4e-3 & 1.61e-2 $\pm$ 8e-4\\
\midrule
$13.02\%$ & \textbf{9.3e-3} $\pm$ 3.1e-3 & 1.05e-2 $\pm$ 2.2e-3 & 1.14e-2 $\pm$ 1.3e-3 & 1.27e-2 $\pm$ 2.5e-3 & 1.46e-2 $\pm$ 6e-4\\
\midrule
$15.62\%$ & \textbf{6.7e-3} $\pm$ 5e-4 & 1.08e-2 $\pm$ 1.8e-3 & 1.05e-2 $\pm$ 9e-4 & 1.17e-2  $\pm$ 1.2e-3 & 1.43e-2 $\pm$ 2e-4\\
\bottomrule
\end{tabular}}
\label{tb:heat_STNP}
\end{table*}

\begin{table*}[t!]
\centering
\caption{Performance comparison of different acquisition functions in STNP model for RD simulator}
\resizebox{\textwidth}{!}{
\begin{tabular}{c|cccccc}
\toprule
Percentage of samples & LIG & EIG & Random & MeanSTD & MaxEntropy \\
\midrule
$6.25\%$ &$\textbf{4.562}\pm0.114$&$4.861\pm0.433$&$5.325\pm0.361$&$5.264\pm0.298$&$4.826\pm0.336$\\
\midrule
$12.50\%$ &$\textbf{3.841}\pm0.253$&$4.590\pm0.529$&$4.179\pm0.045$&$4.157\pm0.252$&$4.084\pm0.042$\\
\midrule
$18.75\%$ &$\textbf{3.165}\pm0.142$&$4.162\pm0.696$&$3.602\pm0.182$&$3.675\pm0.229$&$3.694\pm0.140$\\
\midrule
$25.00\%$ &$\textbf{2.415}\pm0.083$&$3.993\pm0.847$&$3.140\pm0.165$&$3.339\pm0.111$&$3.302\pm0.284$\\
\midrule
$31.25\%$ &$\textbf{2.302}\pm0.007$&$3.714\pm0.861$&$2.561\pm0.243$&$2.791\pm0.072$&$2.912\pm0.473$\\
\bottomrule
\end{tabular}}
\label{tb:RD_STNP}
\end{table*}

\begin{table*}[t!]
\centering
\caption{Performance comparison of different acquisition functions in STNP model for LEAM-US simulator, population divided by 1000.}
\resizebox{\textwidth}{!}{
\begin{tabular}{c|cccccc}
\toprule
Percentage of samples & LIG & EIG & Random & MeanSTD & MaxEntropy \\
\midrule
$11.1\%$ &$\textbf{14.447}\pm1.087$&$19.067\pm3.981$&$20.961\pm5.548$&$35.356\pm28.706$&$65.498\pm13.324$\\
\midrule
$13.7\%$ &$\textbf{11.704}\pm0.216$&$16.372\pm3.663$&$13.418\pm0.815$&$16.092\pm3.11$&$30.496\pm24.333$\\
\midrule
$21.3\%$ &$\textbf{7.593}\pm0.822$&$11.754\pm1.713$&$9.332\pm0.601$&$11.191\pm0.184$&$10.028\pm2.065$\\
\midrule
$28.9\%$ &$\textbf{6.539}\pm0.618$&$9.455\pm0.595$&$8.077\pm0.657$&$7.908\pm0.536$&$8.417\pm0.616$\\
\midrule
$36.5\%$ &$\textbf{6.008}\pm1.079$&$8.596\pm0.741$&$6.719\pm0.383$&$7.533\pm0.861$&$7.431\pm0.776$\\
\bottomrule
\end{tabular}}
\label{tb:LEAM_STNP}
\end{table*}

\subsection{INP, GP, and SNL Model}
Table \ref{tb:SEIR_INP} and Table \ref{tb:SEIR_GP} show the average results together with the standard deviation of \ours{}, GP, and SNL model for SEIR simulator after running experiments three times. The performance of \ours{} with the proposed LIG is much better than GP and SNL baselines at each iteration.

\subsection{Active learning performance comparison.}
\label{app:active_learning}
Table \ref{tb:heat_STNP}, Table \ref{tb:RD_STNP}, and Table \ref{tb:LEAM_STNP} show the active learning performance comparison results on Heat, RD, and LEAM-US simulation task. The performance of \ours{} with the proposed LIG always outperforms the baselines at each iteration among all $3$ tasks. 


            


\subsection{Batch  Active Learning with LIG.}

\begin{figure}[t!]
  \centering
  \includegraphics[width=0.6\linewidth,trim={20 20 20 0}]{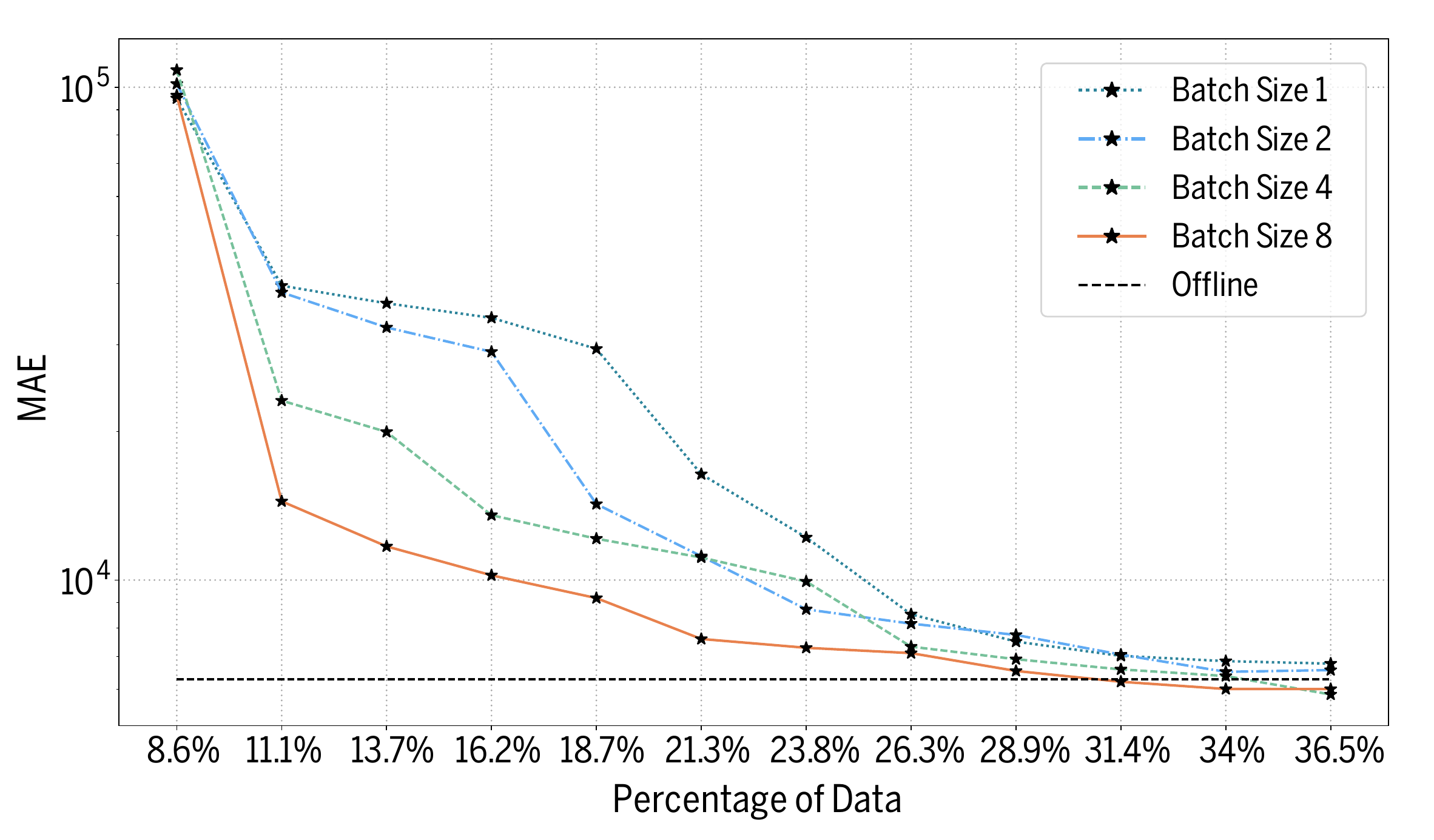}
  \caption{Batch size comparisons for LIG on the LEAM-US simulator. MAE loss versus the percentage of samples for \ours{} during Bayesian active learning.}
  \label{fig:ablation_pred}
\end{figure}
Figure \ref{fig:ablation_pred} compares 4 different setups:
8 batches (size 1), 4 batches (size 2), 2 batches
(size 4), and 1 batch (size 8).

\end{document}